\RequirePackage[table,xcdraw,svgnames]{xcolor}
\documentclass[lettersize,journal]{IEEEtran}

\usepackage[sort,nocompress]{cite}
\usepackage{amssymb}
\usepackage{bbm}
\usepackage{multirow}
\usepackage{booktabs}
\usepackage{adjustbox}
\newcommand{\myref}[1]{Eq.~\eqref{#1}}
\usepackage{svg}
\usepackage{enumitem}
\usepackage{microtype}
\usepackage{graphicx}
\usepackage{booktabs}
\usepackage{placeins}
\usepackage[normalem]{ulem}
\useunder{\uline}{\ul}{}
\usepackage{setspace}
\usepackage{hyperref}
\usepackage{url}

\definecolor{bestcell}{RGB}{211, 204, 241}
\definecolor{secondcell}{RGB}{227, 242, 253}

\usepackage{amsmath}
\usepackage{amssymb}
\usepackage{mathtools}
\usepackage{amsthm}

\usepackage[capitalize,noabbrev]{cleveref}

\crefname{figure}{Fig.}{Figs.}
\Crefname{figure}{Fig.}{Figs.}
\theoremstyle{plain}
\newtheorem{theorem}{Theorem}
\newtheorem{proposition}{Proposition}

\newtheorem{assumption}{Assumption}

\theoremstyle{remark}

\usepackage{amsmath,amsfonts}

\usepackage{algorithm}
\usepackage{algpseudocode}
\usepackage{array}
\usepackage[caption=false,font=normalsize,labelfont=sf,textfont=sf]{subfig}
\usepackage{textcomp}
\usepackage{stfloats}
\usepackage{url}
\usepackage{verbatim}
\usepackage{graphicx}
\def\BibTeX{{\rm B\kern-.05em{\sc i\kern-.025em b}\kern-.08em
    T\kern-.1667em\lower.7ex\hbox{E}\kern-.125emX}}
\usepackage{balance}
\begin{document}
\title{Latent Block-Diffusion Temporal Point Processes: A Semi-Autoregressive Framework for Asynchronous Event Sequence Generation}
\author{Shuai Zhang, Yancheng Chen, Chuan Zhou, 
\IEEEmembership{Member, IEEE}, Yang Liu, Xixun Lin, Xiangyu Zhao, \\Jun Zhu, \IEEEmembership{Fellow, IEEE}, and Zhi-Ming Ma
\thanks{Shuai Zhang, Yancheng Chen, Chuan Zhou, Yang Liu, and Zhi-Ming Ma are with the Academy of Mathematics and Systems Science, Chinese Academy of Sciences, Beijing 100190, China (e-mail: \protect\url{zhangshuai2021@amss.ac.cn}; \protect\url{chenyancheng22@mails.ucas.ac.cn};  \protect\url{zhouchuan@amss.ac.cn}; \protect\url{liuyang2020@amss.ac.cn}; \protect\url{mazm@amt.ac.cn}).}
\thanks{Xixun Lin is with the Institute of Information Engineering, Chinese Academy of Sciences, Beijing 100093, China (e-mail: \protect\url{linxixun@iie.ac.cn}).}
\thanks{Xiangyu Zhao is with the Department of Data Science, City University of Hong Kong, Hong Kong 999077, China (e-mail: \protect\url{xy.zhao@cityu.edu.hk}).}
\thanks{Jun Zhu is with the Department of Computer Science and Technology, Tsinghua University, Beijing 100084, China (e-mail: \protect\url{dcszj@tsinghua.edu.cn}).}

}

\markboth{Journal of \LaTeX\ Class Files,~Vol.~18, No.~9, September~2020}
{Latent Block-Diffusion Temporal Point Processes: A Semi-Autoregressive Framework for Asynchronous Event Sequence Generation}

\maketitle

\begin{abstract}
Modeling and sampling from the underlying distribution of asynchronous event sequences are crucial in various real-world applications, including social networks, medical diagnosis, and financial transactions. Existing autoregressive methods suffer from error accumulation during multi-step generation, while non-autoregressive diffusion methods are typically limited to fixed-length output sequences. In this paper, we propose Latent Block-Diffusion Temporal Point Processes (LBDTPP), a novel semi-autoregressive TPP framework that introduces a latent block diffusion mechanism for high-quality and variable-length event sequence generation. The core idea is to define an autoregressive probability distribution over event blocks in latent space and perform Gaussian diffusion within each block. By sequentially generating blocks while simultaneously sampling events in each block, LBDTPP preserves the length flexibility of autoregressive TPPs and inherits the parallel high-quality generation capability of diffusion models. Theoretically, we derive Wasserstein error bounds showing that, under suitable local approximation and prefix-stability assumptions, block-wise generation can reduce error accumulation compared with event-wise autoregressive generation. Extensive experiments on six real-world benchmark datasets demonstrate that LBDTPP outperforms state-of-the-art TPP baselines in both unconditional and conditional generation tasks. Further empirical analyses verify the benefits of latent-space diffusion and block-wise generation, and reveal the trade-off between generation quality and block size. Our code is available at \url{https://github.com/Zh-Shuai/LBDTPP}.
\end{abstract}
\begin{IEEEkeywords}
Latent block diffusion, temporal point processes, generative models, asynchronous event sequences.
\end{IEEEkeywords}
\section{Introduction}
\IEEEPARstart{A}{synchronous} event sequences are abundant in many real-world applications, including social networks \cite{fan2022hawkes,chen2024easydgl,gao2024learning}, medical diagnosis \cite{enguehard2020neural,liu2021event,shchur2021detecting}, and financial transactions \cite{shi2023language,zeng2024interacting,jiang2025danmakutppbench}. Each event in a sequence consists of a continuous timestamp and a discrete mark, representing when and what the event occurred. For example, in a social network, an event may record that a user interacted with a post at a specific time, with the mark indicating the corresponding interaction type such as posting, commenting, or sharing. Faithfully modeling and generating such sequences are essential for understanding complex temporal dynamics and supporting decision-making in various domains. Based on the availability of historical information, event sequence generation tasks can be broadly categorized into unconditional generation \cite{ogata1981lewis,ludke2023add,ludkeunlocking}, which aims to simulate high-fidelity event sequences from the underlying data distribution, and conditional generation \cite{du2016recurrent,zuo2020transformer,xue2023easytpp}, which focuses on predicting event occurrences given historical observations.

Temporal point processes (TPPs) \cite{daley2003introduction,snyder2012random,lacoste2005point,ortner2008marked} are the dominant modeling framework for asynchronous event sequences. Most existing TPP models, including Poisson processes \cite{kingman1992poisson}, self-correcting processes \cite{isham1979self}, Hawkes processes \cite{hawkes1971spectra} and their Transformer variants \cite{zhang2020self,zuo2020transformer,yang2022transformer}, follow an autoregressive paradigm, where events are modeled sequentially, with each event conditioned on the preceding history. While autoregressive TPPs naturally support variable-length generation and have achieved strong performance in predicting the next event, their one-by-one generation procedure suffers from error accumulation during multi-step generation \cite{ludke2023add,kerrigan2024eventflow}. Small errors introduced at early steps may propagate through subsequent steps and gradually amplify, leading to substantial degradation in generation quality \cite{xue2022hypro,zeng2024interacting}.

Recently, diffusion probabilistic models \cite{sohl2015deep,ho2020denoising,songdenoising} have emerged as a powerful framework for generative modeling, with notable applications in computer vision \cite{songscore,rombach2022high,lu2022dpm,bao2023all} and natural language processing \cite{li2022diffusion,nie2025large,wang2026diffusion}. Building upon this framework, non-autoregressive diffusion TPPs have been proposed for modeling event sequences \cite{ludke2023add,zeng2024interacting,kerrigan2024eventflow}. By generating multiple events simultaneously, these models avoid the one-by-one sampling of autoregressive TPPs and achieve improved performance in multi-step forecasting tasks. Nevertheless, existing diffusion-based approaches for event sequences typically model the conditional distribution of a fixed-length future sequence given historical events \cite{zeng2024interacting, zhou2025non}. When applied to unconditional generation within a time interval $[0,T]$, such non-autoregressive methods need to model the joint distribution of the entire sequence and generate it in a single shot. As a result, this paradigm is limited to producing sequences with a pre-specified number of events, reducing flexibility and making it unsuitable for realistic scenarios where sequence lengths are unknown and variable.

To mitigate the issues of error accumulation in autoregressive TPPs and fixed-length generation in non-autoregressive diffusion TPPs, we introduce Latent Block-Diffusion Temporal Point Processes (LBDTPP), a novel semi-autoregressive TPP framework that supports high-quality, variable-length event sequence generation in unconditional and conditional settings. LBDTPP decomposes event sequence generation into two levels: (i) sequential generation across blocks to preserve event dependencies and support variable-length generation, and (ii) parallel generation of multiple events within each block via Gaussian diffusion to alleviate error accumulation. This design enables our model to combine the strengths of prior paradigms: it retains the length flexibility of autoregressive TPPs, while obtaining the parallel high-quality generation capability of non-autoregressive diffusion TPPs.

Specifically, LBDTPP draws inspiration from discrete block diffusion models \cite{arriolablock} developed for token generation in natural language, but introduces a latent block diffusion formulation for asynchronous event sequences. Unlike discrete text tokens, event data couple continuous timestamps with discrete marks, making both discrete and continuous diffusion unsuitable for direct application. To this end, LBDTPP first maps each event into a continuous latent space and then factorizes the latent sequence distribution as a product of conditional distributions over event blocks, while performing Gaussian diffusion within each block. Our model sequentially samples latent blocks, generates multiple event representations in parallel within each block, and decodes them back to the original event space, enabling variable-length and high-quality event sequence generation. We further derive Wasserstein generation-error bounds under local approximation and prefix-stability assumptions, indicating that block-wise sampling can reduce prefix-level error accumulation by shortening the recursive sampling horizon from events to blocks. Extensive experiments on six real-world datasets demonstrate that LBDTPP outperforms state-of-the-art TPP baselines, and subsequent analysis shows that the gains stem from latent-space diffusion and block-wise generation.

Our main contributions are as follows:

\begin{itemize}[leftmargin=*]
    \item We introduce LBDTPP, a latent block diffusion framework for modeling asynchronous event sequences. By factorizing the latent sequence distribution across event blocks and performing Gaussian diffusion within each block, LBDTPP forms a semi-autoregressive generation paradigm that supports variable-length generation and parallel high-quality sampling, while mitigating error accumulation in autoregressive TPPs and overcoming the fixed-length limitation of non-autoregressive diffusion TPPs.
    
    \item We provide a theoretical analysis of error accumulation for event-wise and block-wise generation. Under explicit local approximation and prefix-stability assumptions, we derive Wasserstein bounds showing that event-wise autoregressive generation accumulates errors over all event-level sampling steps, whereas block-wise generation accumulates errors only over block-level transitions. This analysis explains why block-wise generation can mitigate prefix-level error accumulation and clarifies the block size trade-off observed empirically.

    \item We conduct extensive experiments on six real-world benchmark datasets across multiple domains. Experimental results demonstrate that LBDTPP outperforms both autoregressive and non-autoregressive TPP baselines in unconditional and conditional generation tasks. Further analysis validates the benefits of latent-space diffusion and block-wise generation. Sampling time comparisons show that LBDTPP achieves competitive generation efficiency, and its fast version can be faster than all baseline models.
\end{itemize}

\section{Related Work}\label{sec:related work}

In this section, we review TPP-based methods for event sequence modeling and generation, including autoregressive TPPs and non-autoregressive diffusion TPPs. We also briefly discuss block diffusion models, which provide methodological inspiration for our latent block diffusion TPPs.

\subsection{Autoregressive TPPs} 

Temporal point processes (TPPs) \cite{daley2003introduction,favreau2019extracting,lin2024extensive,zhou2026advances} are a class of stochastic processes for modeling sequences of random events in continuous time. Most existing TPP models follow an autoregressive paradigm, where each event is modeled conditioned on its preceding events. Early works \cite{hawkes1971spectra,isham1979self,snyder2012random,farajtabar2017coevolve} rely on parametric conditional intensity functions (CIFs) to characterize the expected rate of event occurrences given the history. More recent approaches employ neural networks to learn more flexible CIFs \cite{xiao2017modeling,mei2017neural,zuo2020transformer,zhang2024neural} or conditional probability density functions \cite{shchurintensity,zhang2023multiple,panos2024decomposable,zhang2025conformal}. Although autoregressive TPPs naturally support variable-length generation and perform well in next-event prediction, their sampling procedure remains inherently sequential, generating events one by one. Consequently, errors from early steps may propagate through subsequent events and accumulate during long-horizon generation \cite{xue2022hypro,ludke2023add,zeng2024interacting}. In contrast, our LBDTPP model generates multiple high-quality events simultaneously within each block, alleviating the error accumulation issue and having the potential to improve sampling efficiency.

\subsection{Non-autoregressive Diffusion TPPs} 

Diffusion-based TPP models have emerged as a promising approach for event sequence modeling. DSTPP \cite{yuan2023spatio} adopts denoising diffusion models to capture spatio-temporal event dynamics. AddThin \cite{ludke2023add} and PSDiff \cite{ludkeunlocking} leverage the thinning and superposition properties of point processes \cite{daley2003introduction} to design diffusion-like models on the positive real space and general metric spaces, respectively. However, none of these methods has been explored for TPPs with discrete marks. EventFlow \cite{kerrigan2024eventflow} and EdiTPP \cite{ludke2026edit} employ flow matching \cite{lipmanflow,havasiedit} for unconditional and conditional generation of event timestamp sequences, without modeling event marks. CDiff \cite{zeng2024interacting} introduces two interacting diffusion processes for long-horizon marked event forecasting, but as a non-autoregressive diffusion model, it models fixed-length future sequences given the history. When directly applied to unconditional generation, such an approach needs to model the complete sequence and generate it in one shot, and thus can only produce sequences with a pre-specified number of events. In contrast, LBDTPP generates marked event sequences block by block, enabling variable-length generation in both unconditional and conditional settings.

\subsection{Block Diffusion Models} 

Block diffusion models \cite{han2023ssd,arriolablock,arriola2025encoder} have been proposed to integrate the strengths of autoregressive and diffusion language models, supporting variable-length, high-quality generation and improving inference efficiency with key-value (KV) caching and parallel sampling. These methods are mainly designed for discrete token sequences: they partition tokens into blocks and model each block conditioned on preceding blocks with discrete diffusion \cite{hoogeboom2021argmax,austin2021structured}, enabling parallel generation within blocks while maintaining dependencies across blocks. Such a semi-autoregressive paradigm has shown promise in video generation~\cite{ren2025next,chen2025sana,zhang2025blockvid} and diffusion large language models \cite{nie2025large,wang2026diffusion,wu2025fast,wu2026fast}. Different from these works, LBDTPP targets asynchronous event sequences, where each event contains a continuous timestamp and a discrete mark. We therefore introduce latent block diffusion, which embeds mixed-type events into a latent space and performs continuous Gaussian diffusion within latent blocks. This latent formulation preserves the length flexibility and intra-block parallelism of block diffusion, while making it suitable for event sequence modeling and generation.

\section{Preliminary}\label{sec:preliminary}

In this section, we provide an overview of temporal point processes and diffusion probabilistic models. Throughout this paper, the symbol ``$t$'' denotes the event timestamp, and ``$k$'' denotes the $k$-th step in the diffusion process.

\subsection{Temporal Point Processes}
Given a set of asynchronous event sequences drawn from the data distribution $q(\mathbf{x})$, where each sequence is represented as $\mathbf{x}=\left(\mathbf{x}^1, \ldots, \mathbf{x}^L\right)$, and the $\ell$-th event $\mathbf{x}^{\ell}=(t^{\ell},m^{\ell})$ consists of the occurrence timestamp $t^{\ell}\in\mathbb{R}_{+}$ and the mark $m^{\ell}\in[M]:=\{1,\ldots,M\}$, with $t^{\ell-1}<t^{\ell}$. The sequence length $L$, i.e., the number of events in the sequence, can vary across different sequences. It is worth mentioning that the event timestamps can be equivalently expressed as the inter-event times $\tau^{\ell}=t^{\ell}-t^{\ell-1}\in\mathbb{R}_{+}$, where $t^0=0$. If not otherwise specified, we use the inter-event time representation, i.e., $\mathbf{x}^{\ell}=(\tau^{\ell},m^{\ell})$. The goal is to fit a model $p_\theta(\mathbf{x})$ of $q(\mathbf{x})$ to learn the event sequence distribution, and then use the learned model to generate high-fidelity sequences or make accurate event predictions.

The common approach for modeling event sequences is using temporal point processes (TPPs) \cite{daley2003introduction}, which characterize the occurrence of discrete events in continuous time by defining conditional intensity functions (CIFs) \cite{du2016recurrent} or conditional probability density functions (PDFs) \cite{shchurintensity}. Most existing TPP models, such as Poisson processes \cite{kingman1992poisson}, self-correcting processes \cite{isham1979self}, Hawkes processes \cite{hawkes1971spectra} and their Transformer variants \cite{zhang2020self,zuo2020transformer}, typically parameterize the distribution of $L$ events in the autoregressive (AR) form:
\begin{equation}
\log p_\theta(\mathbf{x})=\sum_{\ell=1}^L \log p_\theta\left(\mathbf{x}^{\ell} \mid \mathbf{x}^{<\ell}\right),
\end{equation}
where $\mathbf{x}^{<\ell}=\left(\mathbf{x}^1, \ldots, \mathbf{x}^{\ell-1}\right)$ denotes the historical events before the $\ell$-th event. Since the conditional PDF $p_\theta\left(\mathbf{x}^{\ell} \mid \mathbf{x}^{<\ell}\right)$ and the CIF $\lambda_\theta\left(\mathbf{x}^{\ell} \mid \mathbf{x}^{<\ell}\right)$ can be expressed in terms of each other~\cite{rasmussen2018lecture}, the above AR factorized distribution can also be equivalently specified using CIFs.

We emphasize that the autoregressive factorization used here is defined for a finite realized event sequence of length $L$, where the modeling target is the probability of the realized sequence itself \cite{boyd2023probabilistic}, rather than the full point-process likelihood up to a fixed terminal time $T$. Under the latter formulation, an additional survival term over the terminal interval would indeed appear. However, under our current sequence-level modeling setup, this term is not required, and omitting it does not affect subsequent model training or optimization objectives.

While autoregressive TPPs have been successful in generating a single subsequent event \cite{zuo2020transformer,yang2022transformer,xue2023easytpp}, their one-by-one sequential sampling procedure can lead to error accumulation in multi-step generation, thereby degrading the overall generation performance \cite{xue2022hypro,zeng2024interacting}.

\subsection{Diffusion Probabilistic Models}\label{sec:diffusion models}
To simplify the expression, we allow some symbolic abuse. In this subsection, we rewrite $\mathbf{x}\sim q(\mathbf{x})$ as a vector in the Euclidean space $\mathbb{R}^L$. Diffusion probabilistic models (hereafter diffusion models) \cite{sohl2015deep,ho2020denoising} overcome the aforementioned sequential sampling limitation by learning the distribution $p_\theta(\mathbf{x})$ directly, admitting parallel generation. 

Diffusion models define a forward process that gradually adds Gaussian noise to the clean data $\mathbf{x}_0=\mathbf{x}$:
\begin{align}
q\left(\mathbf{x}_{1: K} \mid \mathbf{x}_0\right)&=\prod_{k=1}^{K} q\left(\mathbf{x}_k \mid \mathbf{x}_{k-1}\right),\\
q\left(\mathbf{x}_k \mid \mathbf{x}_{k-1}\right)&=\mathcal{N}\left(\mathbf{x}_k ; \sqrt{\alpha_k} \mathbf{x}_{k-1}, (1-\alpha_k) \mathbf{I}\right),
\end{align}
where $\alpha_1,\ldots,\alpha_K$ are the decreasing values in $[0,1]$, and $K$ is the total number of diffusion steps.

On the other hand, the reverse denoising process starts from $p\left(\mathbf{x}_K\right)=\mathcal{N}\left(\mathbf{x}_K ; \mathbf{0}, \mathbf{I}\right)$ and proceeds as follows:
\begin{align}
p_\theta\left(\mathbf{x}_{0: K}\right)&=p\left(\mathbf{x}_K\right) \prod_{k=1}^K p_\theta\left(\mathbf{x}_{k-1} \mid \mathbf{x}_k\right), \\
p_{\mathbf{\theta}}\left(\mathbf{x}_{k-1} \mid \mathbf{x}_{k}\right)&=\mathcal{N}\left(\mathbf{x}_{k-1}; \boldsymbol{\mu}_{\mathbf{\theta}}\left(\mathbf{x}_{k}, k\right), \sigma_k^2\mathbf{I}\right),
\end{align}
where $\sigma_k^2=1-\alpha_k$. To learn the parameters $\theta$, the standard variational inference method involves minimizing the negative evidence lower bound (NELBO) \cite{luo2022understanding}:
\begin{equation}
-\log p_{\mathbf{\theta}}\left(\mathbf{x}_0\right) \leq \mathbb{E}_{q\left(\mathbf{x}_{1: K} \mid \mathbf{x}_0\right)}\left[-\log \frac{p_{\mathbf{\theta}}\left(\mathbf{x}_{0: K}\right)}{q\left(\mathbf{x}_{1: K} \mid \mathbf{x}_0\right)}\right].
\end{equation}
Prior work \cite{ho2020denoising} simplified this NELBO, replacing it with the following loss function:
\begin{equation}
\mathcal{L}_{\text {simple}}(\mathbf{x}_0;\theta)=\mathbb{E}_{k, \mathbf{x}_0, \boldsymbol{\epsilon}}\left[\left\|\boldsymbol{\epsilon}-\boldsymbol{\epsilon}_\theta\left(\mathbf{x}_k,k\right)\right\|^2\right],
\end{equation}
where $\bar{\alpha}_k=\prod_{s=1}^k \alpha_s$, $\boldsymbol{\epsilon} \sim \mathcal{N}(\mathbf{0}, \mathbf{I})$, and $\boldsymbol{\epsilon}_\theta$ is parameterized by neural networks to predict noise $\boldsymbol{\epsilon}$ using the noisy input $\mathbf{x}_k=\sqrt{\bar{\alpha}_k} \mathbf{x}_0+\sqrt{1-\bar{\alpha}_k} \boldsymbol{\epsilon}$ and the diffusion step $k$.

After training, a new data point, still denoted as $\mathbf{x}_0$ for convenience, that follows the learned distribution $p_\theta(\mathbf{x}_0)$, can be generated as follows. We start by sampling $\mathbf{x}_K \sim \mathcal{N}(\mathbf{0}, \mathbf{I})$, and then conduct the reverse process $p_{\mathbf{\theta}}\left(\mathbf{x}_{k-1} \mid \mathbf{x}_{k}\right)$ to iteratively sample $\mathbf{z} \sim \mathcal{N}(\mathbf{0}, \mathbf{I})$ and compute $\mathbf{x}_{k-1}$ for $k=K$ to $1$:
\begin{equation}
\mathbf{x}_{k-1}=\frac{1}{\sqrt{\alpha_{k}}}\big(\mathbf{x}_{k}-\frac{1-\alpha_{k}}{\sqrt{1-\bar{\alpha}_{k}}}\boldsymbol{\epsilon}_{\mathbf{\theta}}\left(\mathbf{x}_{k}, k\right)\big)+\sigma_k \mathbf{z}.
\end{equation}

Although the above standard diffusion model can generate all dimensions of $\mathbf{x}_0$ simultaneously, it is tied to a fixed-dimensional vector of length $L$. This fixed-length constraint limits its ability to simulate event sequences in real-world applications, where the number of events is unknown, varies across sequences, and may exceed the pre-specified length $L$ as time progresses. Moreover, if only a short sequence is needed at inference time, generating a full length-$L$ sequence introduces unnecessary computation and sampling time.

\begin{figure*}[t]
    \centering
    \includegraphics[width=1\linewidth]{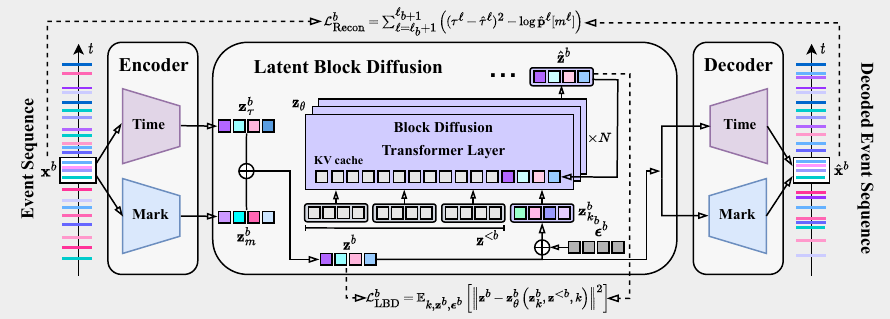}
    \vspace{-7pt}
    \caption{The end-to-end training framework of LBDTPP. The position of each event on the timeline represents its timestamp, and the color denotes its event mark. Our model first embeds the event sequence into a continuous latent space, then learns the latent distribution by autoregressively modeling event blocks and performing Gaussian diffusion within each block, while reconstructing the input event sequence from the latent representations. We jointly train the block diffusion Transformer and the encoder-decoder by minimizing a weighted combination of the latent block diffusion loss and the reconstruction loss.}
    \label{fig:LBDTPP framework}
\end{figure*}
\section{Proposed Method: LBDTPP}\label{sec:method}
In this section, we present Latent Block-Diffusion Temporal Point Processes (LBDTPP), a novel semi-autoregressive TPP framework for modeling asynchronous event sequences. LBDTPP first maps events into a continuous latent space and then models the latent sequence representation in a block-wise manner: it factorizes dependencies autoregressively across event blocks, and performs Gaussian diffusion to learn each block distribution. At inference time, LBDTPP sequentially generates latent event blocks, produces multiple latent event representations in parallel within each block, and subsequently decodes them back into the event space. This design preserves the variable-length generation ability of autoregressive TPPs while inheriting the parallel high-quality generation capability of non-autoregressive diffusion TPPs, and meanwhile mitigates their respective limitations of error accumulation and fixed-length generation. The training framework of LBDTPP is illustrated in \Cref{fig:LBDTPP framework}.

In what follows, we first introduce the model architecture in \Cref{sec:model architecture}, which consists of the event encoder, latent block diffusion, and event decoder. Then, we detail the training and sampling procedures for unconditional event sequence generation in Sections~\ref{sec:training} and \ref{sec:Unconditional Generation}, respectively. We next describe the extension to conditional generation in \Cref{sec:Conditional Generation}, and then provide a theoretical analysis of generation-error accumulation in \Cref{sec:theoretical analysis}.

\subsection{Model Architecture}\label{sec:model architecture}

Since asynchronous event sequences contain both continuous timestamps and discrete marks, neither standard continuous diffusion \cite{ho2020denoising} nor discrete diffusion models \cite{austin2021structured} can be directly applied to raw event sequences. We therefore first construct continuous latent event representations that jointly encode temporal and mark information, providing a homogeneous space for subsequent generative modeling. Below, we describe the architecture components in detail.

\textbf{Event Encoder.} For each event $\mathbf{x}^{\ell}=(\tau^{\ell}, m^{\ell})$ in the sequence $\mathbf{x}=(\mathbf{x}^{1},\ldots,\mathbf{x}^{L})$, we first encode its temporal and mark information separately. Specifically, the inter-event time is mapped to a time embedding $\mathbf{z}_{\tau}^{\ell}=\operatorname{TimeEmbed}(\tau^{\ell})\in\mathbb{R}^{D}$ using positional encoding \cite{zuo2020transformer}, while the mark embedding $\mathbf{z}_m^{\ell}=\operatorname{MarkEmbed}(m^{\ell})\in\mathbb{R}^{D}$ is obtained by applying a linear transformation to the one-hot representation of $m^{\ell}$:
\begin{align}
\left[\operatorname{TimeEmbed}(\tau^{\ell})\right]_d &= \!\left\{\begin{array}{ll}
\!\!\!\cos \big(\tau^{\ell} / 10000^{\frac{d-1}{D}}\big), & \!\!\!\text {if } d \text { is odd},\vspace{2pt}\\
\!\!\!\sin \big(\tau^{\ell} / 10000^{\frac{d}{D}}\big), & \!\!\!\text {if } d \text { is even},
\end{array}\right. \nonumber\\
\operatorname{MarkEmbed}(m^{\ell})&=\mathbf{W}\cdot\operatorname{OneHot}(m^{\ell}), \label{eq:mark-embed}
\end{align}
where $d=0,\ldots,D-1$, and $\operatorname{OneHot}(\cdot):[M]\rightarrow\{0,1\}^M$ denotes the one-hot encoding function. The embedding matrix $\mathbf{W}\in\mathbb{R}^{D\times M}$ is initialized from a uniform distribution and kept fixed during training. Our experiments show that keeping it fixed yields slightly better empirical performance. As a result, the event encoder contains no learnable parameters.

The overall representation of event $\mathbf{x}^{\ell}$ is then obtained by adding its time and mark embeddings, i.e., $\mathbf{z}^{\ell}=\mathbf{z}_{\tau}^{\ell}+\mathbf{z}_m^{\ell}\in\mathbb{R}^{D}$. In this work, we use addition as the default fusion operation, while other operations such as concatenation are also compatible with our framework. By stacking the event representations in temporal order, we form the representation of the entire sequence $\mathbf{x}$ of length $L$ as $\mathbf{z}=(\mathbf{z}^{1},\ldots,\mathbf{z}^{L})\in\mathbb{R}^{L\times D}$.

Given the sequence representation $\mathbf{z}$, we aim to model its distribution while mitigating error accumulation caused by one-by-one autoregressive sampling and overcoming fixed-length non-autoregressive generation. To this end, we propose to model $\mathbf{z}$ at the block level, which captures event dependencies across blocks, supports variable-length generation, and enables parallel diffusion-based generation within each block.

\textbf{Latent Block Diffusion.} We partition the latent sequence representation $\mathbf{z}=\left(\mathbf{z}^1, \ldots, \mathbf{z}^L\right)$ into $B:= L/L^{\prime}$ non-overlapping blocks of length $L^{\prime}$, and assume $B$ is an integer (if not, we pad the raw event sequence in advance so that $L$ is divisible by $L^{\prime}$). For each $b\in[B]$, we denote the $b$-th event block $\left(\mathbf{z}^{\ell_b+1}, \ldots, \mathbf{z}^{\ell_{b+1}}\right)$ simply as $\mathbf{z}^{b}$, where $\ell_b=(b-1)L^{\prime}$. Thus, $\mathbf{z}^{b}$ contains $L^{\prime}$ consecutive event representations. We denote the historical blocks before block $b$ as $\mathbf{z}^{<b}=\left(\mathbf{z}^{1}, \ldots, \mathbf{z}^{\ell_b}\right)$. 

Here and below, superscript $\ell$ indexes individual events, so $\mathbf{z}^{\ell}\in\mathbb{R}^{D}$, whereas superscript $b$ indexes blocks, so $\mathbf{z}^{b}\in\mathbb{R}^{L^{\prime}\times D}$. This distinction avoids ambiguity between event-level and block-level representations.

Different from discrete block diffusion \cite{arriolablock} designed for token states, our latent block diffusion operates on continuous event representations that jointly encode timestamp and mark information. Specifically, we model the distribution of the latent sequence representation by factorizing it autoregressively over blocks and performing Gaussian diffusion within each block. That is, we decompose the log-likelihood of the sequence representation $\mathbf{z}$ over blocks as:
\begin{equation}\label{equ:block factorization}
\log p_{\mathbf{\theta}}(\mathbf{z})=\sum_{b=1}^B \log p_{\mathbf{\theta}}\left(\mathbf{z}^{b} \mid \mathbf{z}^{<b}\right),
\end{equation}
where each conditional distribution $p_{\mathbf{\theta}}\left(\mathbf{z}^{b} \mid \mathbf{z}^{<b}\right)$ is learned by conducting Gaussian diffusion over a block of $L^{\prime}$ event representations. The sequential factorization captures dependencies across blocks and supports variable-length sequence generation through block-by-block generation, while the within-block diffusion learns the conditional distribution of each block and enables parallel high-quality generation of multiple events to reduce error accumulation. The detailed generation procedure will be described in \Cref{sec:Unconditional Generation}, while this part focuses on the modeling formulation.

For each block $b\in[B]$, to learn the conditional distribution $p_{\mathbf{\theta}}\left(\mathbf{z}^{b} \mid \mathbf{z}^{<b}\right)$, we define a forward diffusion process that gradually adds Gaussian noise to the clean block $\mathbf{z}_0^{b}=\mathbf{z}^{b}$:
\begin{equation}\label{eq:q_joint}
q\left(\mathbf{z}_{1: K}^b \mid \mathbf{z}_0^b\right)
= \prod_{k=1}^{K} q\left(\mathbf{z}_k^b \mid \mathbf{z}_{k-1}^b\right),
\end{equation}
\begin{equation}\label{equ:forward definition}
q\left(\mathbf{z}_k^b \mid \mathbf{z}_{k-1}^b\right)
= \mathcal{N}\left(\mathbf{z}_k^b ; \sqrt{\alpha_k} \mathbf{z}_{k-1}^b, (1-\alpha_k) \mathbf{I}\right),
\end{equation}
where $\mathbf{z}_k^{b}$ denotes the noisy $b$-th block at diffusion step $k$. This noise-adding process operates solely on the currently considered block $b$ and is independent of all other blocks, including both historical and subsequent blocks. Thus, Eqs.~\eqref{eq:q_joint} and \eqref{equ:forward definition} are not conditioned on $\mathbf{z}^{<b}$. Similar to standard diffusion models \cite{ho2020denoising}, we can sample $\mathbf{z}_k^{b}$ directly based on $\mathbf{z}_0^{b}$:
\begin{equation}\label{equ:closed distribution form}
q(\mathbf{z}_k^{b} \mid \mathbf{z}_0^{b})=\mathcal{N}\left(\mathbf{z}_k^{b} ; \sqrt{\bar{\alpha}_k} \mathbf{z}_0^{b}, (1-\bar{\alpha}_k) \mathbf{I}\right),
\end{equation}
where $\bar{\alpha}_k=\prod_{s=1}^k \alpha_s$. In other words, the noisy block at any step $k$ has the closed-form expression: 
\begin{equation}\label{equ:closed form}
\mathbf{z}_k^{b}=\sqrt{\bar{\alpha}_k} \mathbf{z}_0^{b}+\sqrt{1-\bar{\alpha}_k} \boldsymbol{\epsilon}^b, \quad\text{where} \,\,\boldsymbol{\epsilon}^b \sim \mathcal{N}(\mathbf{0}, \mathbf{I}).
\end{equation}
Here the added noise $\boldsymbol{\epsilon}^b$ is independent across different blocks.

The corresponding reverse denoising process, defined on block $b$, starts from $p\left(\mathbf{z}_K^{b} \mid \mathbf{z}^{<b}\right)=\mathcal{N}\left(\mathbf{z}_K^{b} ; \mathbf{0}, \mathbf{I}\right)$ and proceeds as follows:
\begin{equation}\label{eq:p_joint}
\!\!\!\!p_{\mathbf{\theta}}\left(\mathbf{z}_{0: K}^b\mid \mathbf{z}^{<b}\right)=p\left(\mathbf{z}_K^b \mid \mathbf{z}^{<b}\right) \prod_{k=1}^K p_{\mathbf{\theta}}\left(\mathbf{z}_{k-1}^b \mid \mathbf{z}_k^b, \mathbf{z}^{<b}\right),
\end{equation}
\begin{equation}\label{equ:reverse definition}
p_{\mathbf{\theta}}\left(\mathbf{z}_{k-1}^{b} \mid \mathbf{z}_k^{b}, \mathbf{z}^{<b}\right)=\mathcal{N}\left(\mathbf{z}_{k-1}^{b}; \boldsymbol{\mu}_{\mathbf{\theta}}^b\left(\mathbf{z}_k^{b}, \mathbf{z}^{<b}, k\right), \sigma_k^2\mathbf{I}\right).
\end{equation}
Notably, the reverse process at block $b$ is conditioned on all preceding blocks $\mathbf{z}^{<b}$, enabling the model to learn the conditional distribution $p_{\mathbf{\theta}}\left(\mathbf{z}^{b} \mid \mathbf{z}^{<b}\right)$ and capture dependencies from historical event blocks.

Based on the forward and reverse processes described above, we can derive the negative evidence lower bound (NELBO) for each term $\log p_{\mathbf{\theta}}\left(\mathbf{z}^{b} \mid \mathbf{z}^{<b}\right)$ in \myref{equ:block factorization}, and then simplify it to the following loss for block $b$:
\begin{equation}\label{equ:element block diffusion loss}
\mathcal{L}_{\text {LBD}}^{b}(\mathbf{z}^{b}, \mathbf{z}^{<b}; \mathbf{\theta})=\mathbb{E}_{k, \mathbf{z}^b, \boldsymbol{\epsilon}^b}\left[\left\|\mathbf{z}^b-\mathbf{z}_{\mathbf{\theta}}^b\left(\mathbf{z}_k^b, \mathbf{z}^{<b}, k\right)\right\|^2\right],
\end{equation}
where $k\sim\operatorname{Unif}\left(\{1,\ldots,K\}\right)$, and the noisy block $\mathbf{z}_k^b$ is obtained according to \myref{equ:closed form}. Moreover, the relationship between the block predictor $\mathbf{z}_{\mathbf{\theta}}^b$ and the reverse process mean $\boldsymbol{\mu}_{\mathbf{\theta}}^b$ is given by:
\begin{equation}\label{equ:reverse mean parameterization}
\begin{aligned}
\boldsymbol{\mu}_{\mathbf{\theta}}^b\left(\mathbf{z}_k^b,\mathbf{z}^{<b},k\right)
= {} & \frac{\sqrt{\alpha_k}(1-\bar{\alpha}_{k-1})\mathbf{z}_k^b}{1-\bar{\alpha}_k} \\
& + \frac{\sqrt{\bar{\alpha}_{k-1}}(1-\alpha_k)}{1-\bar{\alpha}_k}
\mathbf{z}_{\mathbf{\theta}}^b\left(\mathbf{z}_k^b,\mathbf{z}^{<b},k\right).
\end{aligned}
\end{equation}

In \myref{equ:element block diffusion loss}, we use the denoising model $\mathbf{z}_{\mathbf{\theta}}^b\left(\mathbf{z}_k^{b}, \mathbf{z}^{<b}, k\right)$ to directly predict the clean latent block $\mathbf{z}^b$ (with the predicted output denoted as $\hat{\mathbf{z}}^b$) using the noisy block $\mathbf{z}_k^{b}$, historical blocks $\mathbf{z}^{<b}$, and diffusion step $k$. From our early experiments, we found that this $\mathbf{z}^b$-prediction approach outperforms predicting the block noise $\boldsymbol{\epsilon}^b$. The $B$ denoisers $\mathbf{z}_{\mathbf{\theta}}^b$ are parameterized by a single Transformer \cite{vaswani2017attention}, $\mathbf{z}_{\mathbf{\theta}}$, which is equipped with a specialized attention mask to facilitate efficient training as described in \cref{sec:training}.

We then aggregate the latent block diffusion loss over all event blocks as:
\begin{equation}\label{equ:total block diffusion loss}
\mathcal{L}_{\text {LBD}}(\mathbf{z}; \mathbf{\theta})=\frac{1}{L}\sum_{b=1}^B \mathcal{L}_{\text {LBD}}^{b}(\mathbf{z}^{b}, \mathbf{z}^{<b}; \mathbf{\theta}).
\end{equation}

Formally, we summarize the NELBO of our LBDTPP model and its simplification to the above loss function in the following proposition.
\begin{proposition}[NELBO of LBDTPP]
\label{prop:blockwise latent elbo}
Under the sequential factorization over event blocks in \myref{equ:block factorization}, suppose the block-wise forward diffusion process satisfies Eqs.~\eqref{eq:q_joint} and \eqref{equ:forward definition}, and the reverse denoising process for each block $b\in[B]$ is given by Eqs.~\eqref{eq:p_joint} and \eqref{equ:reverse definition}. Then the negative log-likelihood of the latent event sequence representation satisfies

\begin{equation}\label{equ:blockwise latent elbo}
-\log p_{\mathbf{\theta}}(\mathbf{z})
\leq
\sum_{b=1}^B \mathcal{J}_b(\mathbf{z}^b,\mathbf{z}^{<b};\theta),
\end{equation}
where $\mathcal{J}_b(\mathbf{z}^b,\mathbf{z}^{<b};\theta)$ is the standard NELBO for the conditional distribution $p_{\mathbf{\theta}}(\mathbf{z}^b\mid\mathbf{z}^{<b})$. Moreover, similar to \cite{ho2020denoising}, the upper bound in \myref{equ:blockwise latent elbo} can be further simplified to the surrogate latent block diffusion loss in \myref{equ:total block diffusion loss}.
\end{proposition}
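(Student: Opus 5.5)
The plan has three steps: apply the block factorization, bound each conditional term by a standard variational argument, and then specialize to Gaussians as in \cite{ho2020denoising}.

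\textbf{Step 1: factorize.} By \myref{equ:block factorization}, $-\log p_\theta(\mathbf{z})=\sum_{b=1}^B -\log p_\theta(\mathbf{z}^b\mid\mathbf{z}^{<b})$. It therefore suffices to bound each summand separately.

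\textbf{Step 2: per-block variational bound.} Fix $b$ and the history $\mathbf{z}^{<b}$. The conditional likelihood is the marginal
\begin{equation*}
p_\theta(\mathbf{z}^b\mid\mathbf{z}^{<b})=\int p_\theta(\mathbf{z}^b_{0:K}\mid\mathbf{z}^{<b})\,d\mathbf{z}^b_{1:K},
\end{equation*}
with the joint given by \eqref{eq:p_joint}. Multiply and divide the integrand by $q(\mathbf{z}^b_{1:K}\mid\mathbf{z}^b_0)$ from \eqref{eq:q_joint}. This is a legitimate variational distribution because it is a full-support Gaussian chain. Jensen's inequality then gives
\begin{equation*}
-\log p_\theta(\mathbf{z}^b\mid\mathbf{z}^{<b})\le \mathbb{E}_{q(\mathbf{z}^b_{1:K}\mid\mathbf{z}^b_0)}\Big[-\log\frac{p_\theta(\mathbf{z}^b_{0:K}\mid\mathbf{z}^{<b})}{q(\mathbf{z}^b_{1:K}\mid\mathbf{z}^b_0)}\Big]=:\mathcal{J}_b .
\end{equation*}
The forward process does not depend on $\mathbf{z}^{<b}$, and the history only enters $p_\theta$. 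So the conditioning is passive and the argument is literally the unconditional one with $\mathbf{z}^{<b}$ as a fixed parameter. Summing over $b$ yields \eqref{equ:blockwise latent elbo}.

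\textbf{Step 3: simplify $\mathcal{J}_b$ to the surrogate loss.} I would use the Markov property to rewrite $q(\mathbf{z}^b_k\mid\mathbf{z}^b_{k-1})=q(\mathbf{z}^b_{k-1}\mid\mathbf{z}^b_k,\mathbf{z}^b_0)\,q(\mathbf{z}^b_k\mid\mathbf{z}^b_0)/q(\mathbf{z}^b_{k-1}\mid\mathbf{z}^b_0)$. The telescoping then decomposes $\mathcal{J}_b$ into three parts:
\begin{itemize}
\item a prior term $\mathrm{KL}(q(\mathbf{z}^b_K\mid\mathbf{z}^b_0)\,\|\,\mathcal{N}(\mathbf{0},\mathbf{I}))$, which is free of $\theta$;
\item the KL terms $\sum_{k\ge2}\mathbb{E}_q\,\mathrm{KL}\big(q(\mathbf{z}^b_{k-1}\mid\mathbf{z}^b_k,\mathbf{z}^b_0)\,\|\,p_\theta(\mathbf{z}^b_{k-1}\mid\mathbf{z}^b_k,\mathbf{z}^{<b})\big)$;
\item a reconstruction term $-\log p_\theta(\mathbf{z}^b_0\mid\mathbf{z}^b_1,\mathbf{z}^{<b})$.
\end{itemize}

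The posterior $q(\mathbf{z}^b_{k-1}\mid\mathbf{z}^b_k,\mathbf{z}^b_0)$ is Gaussian, obtained from \eqref{equ:closed distribution form}. Its mean $\tilde{\boldsymbol\mu}_k$ has exactly the form \eqref{equ:reverse mean parameterization} with $\mathbf{z}^b_\theta$ replaced by the true $\mathbf{z}^b_0$, and its variance is $\tilde\beta_k\mathbf{I}$. Both distributions in each KL are Gaussian with fixed isotropic variance, so each KL reduces, up to additive constants, to $\frac{1}{2\sigma_k^2}\|\tilde{\boldsymbol\mu}_k-\boldsymbol\mu^b_\theta\|^2$. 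Under the parameterization \eqref{equ:reverse mean parameterization}, this difference equals
\begin{equation*}
\frac{\sqrt{\bar\alpha_{k-1}}(1-\alpha_k)}{1-\bar\alpha_k}\,\big(\mathbf{z}^b_0-\mathbf{z}^b_\theta(\mathbf{z}^b_k,\mathbf{z}^{<b},k)\big),
\end{equation*}
a weighted squared error. The $k=1$ reconstruction term is handled the same way: it is a Gaussian log-density with mean $\boldsymbol\mu_\theta^b$, which for $k=1$ reduces to $\mathbf{z}_\theta^b$ since $\bar\alpha_0=1$.

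Following \cite{ho2020denoising}, I would then drop the $k$-dependent weights and constants. Writing the sum over $k$ as $K\,\mathbb{E}_{k\sim\mathrm{Unif}}$ and reparameterizing $\mathbf{z}^b_k$ via \eqref{equ:closed form} gives \eqref{equ:element block diffusion loss}. Summing over blocks and normalizing by $1/L$ gives \eqref{equ:total block diffusion loss}.

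\textbf{Main obstacle.} The real content is Step 3's posterior computation. The essential point to state clearly is that "simplified" means a reweighted surrogate rather than an inequality. Dropping the weights does not preserve the upper bound, exactly as in \cite{ho2020denoising}. I would phrase the final claim as equality up to positive per-$k$ weights and $\theta$-independent constants.
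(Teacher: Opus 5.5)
Your proposal follows the paper's proof. You factorize over blocks, apply Jensen per block with $q(\mathbf{z}^b_{1:K}\mid\mathbf{z}^b_0)$ as the variational distribution, and use the Markov/Bayes rewriting of $q(\mathbf{z}^b_k\mid\mathbf{z}^b_{k-1})$ to split $\mathcal{J}_b$ into reconstruction, prior-matching and denoising-matching terms. The paper stops at that decomposition and defers the Gaussian simplification to \cite{luo2022understanding}; your explicit posterior-mean computation correctly fills in that omitted step, with the right caveat that dropping the per-$k$ weights gives a surrogate objective rather than a preserved upper bound. That computation yields the weighted error $\frac{\sqrt{\bar\alpha_{k-1}}(1-\alpha_k)}{1-\bar\alpha_k}(\mathbf{z}^b_0-\mathbf{z}^b_\theta)$, and the $k=1$ term reduces correctly via $\bar\alpha_0=1$.
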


\vspace{0.2cm}
The proof is provided in the supplementary material. The resulting latent block diffusion loss $\mathcal{L}_{\text {LBD}}(\mathbf{z}; \mathbf{\theta})$ serves as the latent distribution learning objective: it trains the denoising Transformer to recover clean latent event blocks from noisy ones conditioned on historical blocks, enabling the reverse process to sample coherent latent blocks at inference time. To obtain actual event sequences, the generated latent representations need to be mapped back to the event space.

\begin{algorithm}[t]
\caption{LBDTPP Training}
\label{alg:unconditional training}
\textbf{Input:} event sequence $\mathbf{x}$ of length $L$, block size $L^{\prime}$, diffusion steps $K$

\textbf{repeat}

\begin{enumerate}[label=\arabic*.]
    \item $\mathbf{z}$ $\leftarrow$ $\texttt{Encoder}(\mathbf{x})$; \,$\hat{\mathbf{x}}$ $\leftarrow$ $\texttt{Decoder}(\mathbf{z})$\vspace{4pt} 
    \item Sample $k_1, \dots, k_B \sim \operatorname{Unif}\left(\{1, \dots, K\}\right)\hfill \triangleright \text{\small $B=L/L^{\prime}$}$\vspace{4pt} 
    \item $\forall b \in \{1, \dots, B\}$: \, $\mathbf{z}_{k_b}^{b} \sim q(\,\cdot\mid\mathbf{z}^b) \hfill \triangleright$ $\text{\small \myref{equ:closed distribution form}}$\vspace{4pt} 
    \item $\emptyset, \mathbf{K}^{1: B}, \mathbf{V}^{1: B} \leftarrow \mathbf{z}_{\mathbf{\theta}}(\mathbf{z}) \hfill \triangleright$ $\text{\small $\mathrm{KV}$ cache}$ \vspace{4pt} 
    \item $\forall b$: \,$\hat{\mathbf{z}}^b, \emptyset, \emptyset \leftarrow \mathbf{z}_{\mathbf{\theta}}^b\left(\mathbf{z}_{k_b}^{b}, \mathbf{K}^{1:b-1}, \mathbf{V}^{1:b-1}, k_b\right)$ \vspace{4pt} 
    \item $\hat{\mathbf{z}} \leftarrow \hat{\mathbf{z}}^{1} \oplus \cdots \oplus \hat{\mathbf{z}}^{B}$ \vspace{4pt} 
    \item Take gradient descent step on $\nabla_{\mathbf{\theta}, \mathbf{\phi}}\, \mathcal{L}_{\text {Overall}}(\mathbf{x};\mathbf{\theta},\mathbf{\phi})$ \vspace{2pt}
\end{enumerate}
\vspace{0.5pt}
\textbf{until} converged
\end{algorithm}

\vspace{0.3cm}
\textbf{Event Decoder.} We design an event decoder to reconstruct the event sequence $\mathbf{x}$ from its latent representation $\mathbf{z}$, so that latent blocks sampled during inference can be decoded back to the original event space. Specifically, the $\ell$-th reconstructed event $\hat{\mathbf{x}}^{\ell}=(\hat{\tau}^{\ell}, \hat{m}^{\ell})$ is decoded from $\mathbf{z}^{\ell}$ by:

\begin{equation}\label{eq:time reconstruction}
\hat{\tau}^{\ell}=\operatorname{Softplus}\left(\operatorname{MLP}_{\tau}\left(\mathbf{z}^{\ell}\right)\right),
\end{equation}
\begin{equation}\label{eq:mark reconstruction}
  \hat{m}^{\ell}=\underset{m}{\operatorname{argmax}}\,\hat{\mathbf{p}}^{\ell}[m],  \quad\hat{\mathbf{p}}^{\ell}= \operatorname{Softmax}\left(\operatorname{MLP}_{m}\left(\mathbf{z}^{\ell}\right)\right),
\end{equation}
where $\operatorname{MLP}_{\tau}(\cdot): \mathbb{R}^{D}\rightarrow\mathbb{R}$ and $\operatorname{MLP}_{m}(\cdot): \mathbb{R}^{D}\rightarrow\mathbb{R}^{M}$ are two learnable multi-layer perceptrons (MLPs) for inter-event time and event mark reconstruction, respectively. The $\operatorname{Softplus}$ activation function ensures the non-negativity of the reconstructed inter-event times. Here, $\hat{\mathbf{p}}^{\ell}[m]$ denotes the $m$-th element of the probability vector $\hat{\mathbf{p}}^{\ell}\in (0,1)^M$, i.e., the predicted probability assigned to the mark $m$.

\vspace{0.2cm}
Importantly, we train this event decoder to reconstruct event sequences accurately, such that latent representations generated by the reverse denoising process can be mapped to high-quality event sequences at inference time. We therefore define the reconstruction loss using the mean squared error for inter-event times and the cross-entropy loss for one-hot encoded marks:
\begin{equation}\label{equ:reconstruction loss}
\begin{aligned}
\mathcal{L}_{\text {Recon}}(\mathbf{x}, \hat{\mathbf{x}};\mathbf{\phi})&=\frac{1}{L} \sum_{b=1}^B\mathcal{L}_{\text {Recon}}^b(\mathbf{x}^b, \hat{\mathbf{x}}^b;\mathbf{\phi})\\
&=\frac{1}{L} \sum_{b=1}^B\sum_{\ell=\ell_b+1}^{\ell_{b+1}}\Big(\!\left(\tau^{\ell}-\hat{\tau}^{\ell}\right)^2 \!- \log \hat{\mathbf{p}}^{\ell}[m^{\ell}]\Big).
\end{aligned}
\end{equation}

As previously mentioned, the event encoder does not contain learnable parameters; thus, optimizing the reconstruction loss $\mathcal{L}_{\text {Recon}}$ only updates the event decoder parameters $\mathbf{\phi}$.

\begin{algorithm}[t]
\caption{LBDTPP Sampling}
\label{alg:unconditional sampling}
\textbf{Input:} model $\mathbf{z}_{\mathbf{\theta}}$, generation interval $[0,T]$

$\mathbf{x}, \mathbf{K}, \mathbf{V} \leftarrow \emptyset$; \,$b=1$, $\ell_{b}=0$, $t^{\ell_{b}}=0$

\textbf{while} $t^{\ell_{b}}<T$ \textbf{do}
\vspace{-1pt}
\begin{enumerate}[label=\arabic*.]
    \item $\mathbf{z}^b \leftarrow \texttt{SAMPLE}\left(\mathbf{z}_\theta^b, \mathbf{K}^{1: b-1}, \mathbf{V}^{1: b-1}\right)\, \hfill \triangleright \text{\small$\texttt{len}(\mathbf{z}^b)=L^{\prime}$}$ \vspace{3pt}
    \item $\emptyset, \mathbf{K}^b, \mathbf{V}^b \leftarrow \mathbf{z}_\theta^b\left(\mathbf{z}^b\right) \hfill \triangleright$ \text{\small $\mathrm{KV}$ cache} \vspace{3pt} 
    \item $(\mathbf{K}, \mathbf{V}) \leftarrow\left(\mathbf{K}^{1: b-1} \oplus \mathbf{K}^b, \mathbf{V}^{1: b-1} \oplus \mathbf{V}^b\right)$ \vspace{3pt}
    \item $\mathbf{x}^b=\{(\tau^{\ell_{b}+i},m^{\ell_{b}+i})\}_{i=1}^{L^{\prime}} \leftarrow \texttt{Decoder}(\mathbf{z}^b)$\vspace{3pt}
    \item $\mathbf{x} \leftarrow \mathbf{x}^{1: b-1} \oplus \mathbf{x}^b$ \vspace{3pt}
    \item $\ell_{b+1}=\ell_{b}+\text{\small $L^{\prime}$}$;\,\,$t^{\ell_{b+1}}=t^{\ell_{b}}+\sum_{i=1}^{L^{\prime}}\tau^{\ell_{b}+i}$  \vspace{3pt} 
    \item $b \leftarrow b+1$  \vspace{-2pt}
\end{enumerate}
\textbf{end}\vspace{-1pt}\\
\textbf{return} $\text{truncate}(\mathbf{x},T)  \hfill \triangleright$ \text{\small Truncate at $T$ based on timestamps} 
\end{algorithm}

\subsection{End-to-End Training}\label{sec:training}

We train LBDTPP end-to-end by jointly optimizing two objectives: the latent block diffusion loss in \myref{equ:total block diffusion loss} and the reconstruction loss in \myref{equ:reconstruction loss}. The former loss learns the conditional distribution of latent event blocks, while the latter ensures that latent representations can be decoded into events that closely match the original sequence. The overall objective is defined as their weighted combination:

\begin{equation}\label{equ: overall loss}
\mathcal{L}_{\text {Overall}}(\mathbf{x};\mathbf{\theta},\mathbf{\phi}) =\mathcal{L}_{\text {LBD}}+ \lambda \mathcal{L}_{\text {Recon}}.
\end{equation}
Here, the hyperparameter $\lambda>0$ balances the two loss terms.

To efficiently compute the above latent block diffusion loss $\mathcal{L}_{\text {LBD}}$, we pre-calculate keys and values $\mathbf{K}^{1: B}, \mathbf{V}^{1: B}$ for the full event sequence representation $\mathbf{z}$ in a first forward pass $(\emptyset, \mathbf{K}^{1: B}, \mathbf{V}^{1: B}) \leftarrow \mathbf{z}_{\mathbf{\theta}}(\mathbf{z})$, as shown in \cref{alg:unconditional training}, where $\mathbf{K}^{b}$ and $\mathbf{V}^{b}$ correspond to block $b$. We then compute the denoised prediction $\hat{\mathbf{z}}^b$ using $\mathbf{z}_{\mathbf{\theta}}^b(\mathbf{z}_k^{b}, \mathbf{K}^{1:b-1}, \mathbf{V}^{1:b-1}, k_b)$ for each $b\in[B]$, where the cached keys and values from preceding blocks $\mathbf{z}^{<b}$ are utilized. In practice, instead of invoking the denoising network $\mathbf{z}_{\mathbf{\theta}}^b$ in a loop $B$ times, we adopt a vectorized implementation approach by leveraging the block diffusion attention mask \cite{arriolablock}. This specialized attention mask for the concatenation $\mathbf{z}_{\text {noisy}} \oplus \mathbf{z}$ ensures that noisy event representations attend to other noisy event representations in their block and to all clean event representations in preceding blocks, which allows us to compute $\mathcal{L}_{\text{LBD}}$ in a single forward pass on $\mathbf{z}_{\text {noisy}} \oplus \mathbf{z}$. Here, the noisy sequence representation $\mathbf{z}_{\text {noisy}}:=\mathbf{z}_{k_1}^1 \oplus \cdots \oplus \mathbf{z}_{k_B}^B$ is obtained by applying a noise level $k_b$ to each block $\mathbf{z}^b$ based on \myref{equ:closed form}.

\subsection{Unconditional Generation}\label{sec:Unconditional Generation}

During inference, we sample one block of $L^{\prime}$ latent event representations at each time, conditioned on the previously sampled blocks, with the keys and values cached to avoid redundant computations. Note that any diffusion sampling algorithm $\texttt{SAMPLE}$, such as DDPM \cite{ho2020denoising} or DDIM \cite{songdenoising}, can be utilized to perform the reverse process in Eqs.~\eqref{eq:p_joint} and \eqref{equ:reverse definition}, ultimately obtaining a sample $\mathbf{z}^b$ from $p_{\mathbf{\theta}}\left(\mathbf{z}^{b} \mid \mathbf{z}^{<b}\right)$. Then, the latent block $\mathbf{z}^b$ is decoded to obtain the corresponding $L^{\prime}$ events $\mathbf{x}^b$. Repeat this process until the generated sequence reaches the termination time $T$, and then truncate the sequence at this point, as the event timestamps from the last block may exceed $T$. We summarize this unconditional sampling procedure in \cref{alg:unconditional sampling}.

Crucially, the sampling procedure of LBDTPP offers two advantages: (i) it generates multiple high-quality events simultaneously via Gaussian diffusion within blocks, which can reduce error accumulation caused by one-by-one generation in autoregressive TPPs; and (ii) it enables variable-length event sequence generation in a block-by-block manner, overcoming the fixed-length generation limitation of non-autoregressive diffusion TPPs.

\subsection{Conditional Generation}\label{sec:Conditional Generation}

Next, we extend the unconditional modeling and sampling methods introduced above to the context of conditional generation. We re-express an event sequence in the interval $[0,T]$ as $\mathbf{x}=(\mathbf{x}_{c}, \mathbf{x}_{u})$, where $\mathbf{x}_{c}=(\mathbf{x}^{1}, \ldots, \mathbf{x}^{L_{c}})$ represents the historical events in $[0,T_{c}]$, $\mathbf{x}_{u}=(\mathbf{x}^{L_{c}+1}, \ldots, \mathbf{x}^{L})$ represents the future events in $(T_{c},T]$, and $0<T_{c}<T$. The goal of conditional generation is to generate the future sequence $\mathbf{x}_{u}$ based on the historical sequence $\mathbf{x}_{c}$.

To achieve this, we first encode the entire sequence $\mathbf{x}$ into its latent representation $\mathbf{z}=(\mathbf{z}_{c}, \mathbf{z}_{u})$ using the event encoder described in \cref{sec:model architecture}, where $\mathbf{z}_{c}$ and $\mathbf{z}_{u}$ correspond to the historical and future sequence representations, respectively. We then partition $\mathbf{z}_{u}$ into $B_{u}:=L_{u}/L^{\prime}$ blocks of length $L^{\prime}$, where $L_{u}=L-L_{c}$. Similar to \myref{equ:block factorization}, we factorize the log-likelihood of the future sequence representation $\mathbf{z}_{u}$ conditioned on the historical representation $\mathbf{z}_{c}$ as:
\begin{equation}\label{equ:conditional block factorization}
\log p_{\mathbf{\theta}}(\mathbf{z}_{u} \mid \mathbf{z}_{c})=\sum_{b_u=1}^{B_{u}} \log p_{\mathbf{\theta}}\left(\mathbf{z}_u^{b_u} \mid \mathbf{z}_{c}, \mathbf{z}_u^{<b_u}\right),
\end{equation}
where $\mathbf{z}_u^{b_u}$ denotes the $b_u$-th block of $\mathbf{z}_{u}$, and $\mathbf{z}_u^{<b_u}$ represents all historical blocks before block $b_u$ within $\mathbf{z}_{u}$. Each conditional distribution $p_{\mathbf{\theta}}\left(\mathbf{z}_u^{b_u} \mid \mathbf{z}_{c}, \mathbf{z}_u^{<b_u}\right)$ is modeled by Gaussian diffusion over block $b_u$, similar to Eqs.~\eqref{eq:q_joint}--\eqref{equ:forward definition} and Eqs.~\eqref{eq:p_joint}--\eqref{equ:reverse definition}. The training algorithm for conditional generation closely follows that of unconditional generation. Specifically, both of them share the same reconstruction loss. Besides, the latent block diffusion loss of conditional generation can be seen as a part of the summation term in \myref{equ:total block diffusion loss}, since the log-likelihood of conditional generation in \myref{equ:conditional block factorization} is a component of that of unconditional generation in \myref{equ:block factorization}, i.e., $\log p_{\mathbf{\theta}}(\mathbf{z})=\log p_{\mathbf{\theta}}(\mathbf{z}_{c})+\log p_{\mathbf{\theta}}(\mathbf{z}_{u} \mid \mathbf{z}_{c})$.

For conditional generation, unlike unconditional generation where the entire sequence is sampled from scratch, we only need to generate the future sequence $\mathbf{x}_{u}$ given the historical sequence $\mathbf{x}_{c}$. Similar to \cref{alg:unconditional sampling}, this is achieved by sequentially sampling the future block $\mathbf{z}_u^{b_u}$ through the diffusion sampling algorithm $\texttt{SAMPLE}$, conditioned on both the historical representation $\mathbf{z}_{c}$ and the previously sampled future blocks $\mathbf{z}_u^{<b_u}$. Subsequently, we decode $\mathbf{z}_u^{b_u}$ to obtain the future events $\mathbf{x}_u^{b_u}$. This procedure is repeated until the termination time $T$, where the sequence is then truncated.

\subsection{Theoretical Analysis}\label{sec:theoretical analysis}

We now provide a theoretical analysis under the assumptions stated below to illustrate how block-wise generation can reduce error accumulation compared with event-wise generation in unconditional generation. The conditional case follows similarly by fixing the observed history. We measure error accumulation by the Wasserstein discrepancy between the generated latent sequence distribution and the true latent sequence distribution. We consider a length-$L$ latent sequence generated from scratch by an unconditional model, and assume $L=BL^{\prime}$ after padding if necessary. The stopping rule based on the termination time $T$ only determines how many generated events are retained, and is thus orthogonal to the prefix-level error accumulation analyzed below. We emphasize prefix-level accumulation because each transition is conditioned on previously generated events or blocks, so discrepancies in the generated prefix can perturb the conditioning context and propagate to subsequent transitions. 

For two latent sequences $\mathbf{u},\mathbf{v}\in\mathbb{R}^{r\times D}$, define the additive sequence metric
\begin{equation}
d_r(\mathbf{u},\mathbf{v})=\sum_{\ell=1}^{r}\|\mathbf{u}^{\ell}-\mathbf{v}^{\ell}\|_2,
\end{equation}
and let $W_1^r(\cdot,\cdot)$ denote the Wasserstein-1 distance induced by $d_r$. Denote by $\mathsf{P}_{1:L}$ the true unconditional latent distribution of the length-$L$ sequence, by $\mathsf{Q}_{1:L}^{\mathrm{AR}}$ the distribution generated from scratch by an event-wise autoregressive model, and by $\mathsf{Q}_{1:L}^{\mathrm{BL}}$ the distribution generated from scratch block by block.

\begin{assumption}[Uniform local approximation and prefix stability]\label{assump:error_accumulation}
For event-wise autoregressive generation, let $\mathsf{P}_{\ell}(\cdot\mid \mathbf{z}^{<\ell})$ and $\mathsf{Q}_{\ell}^{\mathrm{AR}}(\cdot\mid \mathbf{z}^{<\ell})$ denote the true and learned one-event transition kernels in the chain-rule factorization of the unconditional sequence distribution. There exist constants $\varepsilon_{\mathrm{AR}}\geq0$ and $\rho_{\mathrm{AR}}\geq0$ such that, for all $\ell$ and event-prefix realizations $\mathbf{h},\mathbf{h}'\in\mathbb{R}^{(\ell-1)\times D}$,
\begin{align}
W_1^1\!\left(\mathsf{P}_{\ell}(\cdot\mid\mathbf{h}),\mathsf{Q}_{\ell}^{\mathrm{AR}}(\cdot\mid\mathbf{h})\right)&\leq \varepsilon_{\mathrm{AR}},\\
W_1^1\!\left(\mathsf{P}_{\ell}(\cdot\mid\mathbf{h}),\mathsf{P}_{\ell}(\cdot\mid\mathbf{h}')\right)&\leq \rho_{\mathrm{AR}} d_{\ell-1}(\mathbf{h},\mathbf{h}').
\end{align}
For block-wise unconditional generation, let $\mathsf{P}_{b}^{\mathrm{BL}}(\cdot\mid \mathbf{z}^{<b})$ and $\mathsf{Q}_{b}^{\mathrm{BL}}(\cdot\mid \mathbf{z}^{<b})$ denote the true and learned transition kernels for the $b$-th block in the block factorization of the unconditional sequence distribution. There exist constants $\varepsilon_{\mathrm{BL}}\geq0$ and $\rho_{\mathrm{BL}}\geq0$ such that, for all $b$ and block-prefix realizations $\mathbf{g},\mathbf{g}'\in\mathbb{R}^{(b-1)L^{\prime}\times D}$,
\begin{align}
W_1^{L^{\prime}}\!\left(\mathsf{P}_{b}^{\mathrm{BL}}(\cdot\mid\mathbf{g}),\mathsf{Q}_{b}^{\mathrm{BL}}(\cdot\mid\mathbf{g})\right)&\leq \varepsilon_{\mathrm{BL}},\\
W_1^{L^{\prime}}\!\left(\mathsf{P}_{b}^{\mathrm{BL}}(\cdot\mid\mathbf{g}),\mathsf{P}_{b}^{\mathrm{BL}}(\cdot\mid\mathbf{g}')\right)&\leq \rho_{\mathrm{BL}} d_{(b-1)L^{\prime}}(\mathbf{g},\mathbf{g}').
\end{align}
\end{assumption}

The following theorem formalizes the comparison between event-wise and block-wise generation. Its purpose is to isolate the accumulation mechanism rather than to assert that block-wise generation always has a smaller local approximation error: when the block-level approximation and stability are comparable to their event-wise counterparts, reducing the number of recursive transitions from $L$ events to $B$ blocks reduces the opportunities for prefix-level errors to propagate.

\begin{theorem}[Prefix-level generation-error accumulation in the unconditional setting]\label{thm:error_accumulation}
Under \Cref{assump:error_accumulation}, define
\begin{equation}
A_n(\rho)=
\begin{cases}
\frac{(1+\rho)^n-1}{\rho}, & \rho>0,\\
n, & \rho=0.
\end{cases}
\end{equation}
Then the event-wise autoregressive generator satisfies
\begin{equation}\label{eq:ar_error_bound}
W_1^L\!\left(\mathsf{P}_{1:L},\mathsf{Q}_{1:L}^{\mathrm{AR}}\right)
\leq \varepsilon_{\mathrm{AR}} A_L(\rho_{\mathrm{AR}}),
\end{equation}
whereas the block-wise generator satisfies
\begin{equation}\label{eq:block_error_bound}
W_1^L\!\left(\mathsf{P}_{1:L},\mathsf{Q}_{1:L}^{\mathrm{BL}}\right)
\leq \varepsilon_{\mathrm{BL}} A_B(\rho_{\mathrm{BL}}).
\end{equation}
Furthermore, if $\varepsilon_{\mathrm{BL}}\leq L^{\prime}\varepsilon_{\mathrm{AR}}$ and $\rho_{\mathrm{BL}}\leq \rho_{\mathrm{AR}}=\rho$, then the relative block-wise accumulation factor, i.e., the ratio between the block-wise and event-wise accumulation upper bounds under these conditions, is bounded by
\begin{equation}\label{eq:accumulation_factor}
\frac{L^{\prime}A_B(\rho)}{A_L(\rho)}\leq 1,
\end{equation}
with strict inequality when $\rho>0$ and $L^{\prime}>1$.
\end{theorem}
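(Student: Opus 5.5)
The argument will be a coupling/induction on prefixes, carried out in the same way for both generators. For the event-wise case, let $e_\ell := W_1^{\ell}(\mathsf{P}_{1:\ell},\mathsf{Q}^{\mathrm{AR}}_{1:\ell})$ with $e_0=0$.

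\textbf{Inductive step.} Take an optimal coupling $\pi_{\ell-1}$ of the two prefix laws, so that $(\mathbf{h},\mathbf{h}')$ is distributed as $\pi_{\ell-1}$ with $\mathbb{E}\,d_{\ell-1}(\mathbf{h},\mathbf{h}')=e_{\ell-1}$. Conditionally on $(\mathbf{h},\mathbf{h}')$, couple the next events $\mathbf{z}^\ell\sim\mathsf{P}_\ell(\cdot\mid\mathbf{h})$ and $\mathbf{z}'^\ell\sim\mathsf{Q}^{\mathrm{AR}}_\ell(\cdot\mid\mathbf{h}')$ optimally. Because $d_\ell$ is additive, $d_\ell=d_{\ell-1}+\|\mathbf{z}^\ell-\mathbf{z}'^\ell\|_2$. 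The triangle inequality for $W_1^1$ gives
\begin{equation*}
W_1^1\bigl(\mathsf{P}_\ell(\cdot\mid\mathbf{h}),\mathsf{Q}^{\mathrm{AR}}_\ell(\cdot\mid\mathbf{h}')\bigr)\le W_1^1\bigl(\mathsf{P}_\ell(\cdot\mid\mathbf{h}),\mathsf{P}_\ell(\cdot\mid\mathbf{h}')\bigr)+W_1^1\bigl(\mathsf{P}_\ell(\cdot\mid\mathbf{h}'),\mathsf{Q}^{\mathrm{AR}}_\ell(\cdot\mid\mathbf{h}')\bigr)\le \rho_{\mathrm{AR}}d_{\ell-1}(\mathbf{h},\mathbf{h}')+\varepsilon_{\mathrm{AR}},
\end{equation*}
using both parts of \Cref{assump:error_accumulation}. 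Taking expectations yields the recursion $e_\ell\le(1+\rho_{\mathrm{AR}})e_{\ell-1}+\varepsilon_{\mathrm{AR}}$.

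\textbf{Closing the recursion.} Unrolling gives $e_L\le\varepsilon_{\mathrm{AR}}\sum_{j=0}^{L-1}(1+\rho_{\mathrm{AR}})^j=\varepsilon_{\mathrm{AR}}A_L(\rho_{\mathrm{AR}})$, which is \eqref{eq:ar_error_bound}. The block-wise bound \eqref{eq:block_error_bound} follows from the identical argument run over the $B$ block transitions. It uses $W_1^{L'}$ for a block, and $d_{bL'}=d_{(b-1)L'}+d_{L'}(\text{block})$ makes the same recursion work with $(\varepsilon_{\mathrm{BL}},\rho_{\mathrm{BL}})$.

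\textbf{Main obstacle: measurability.} The inductive step needs the conditional optimal couplings to be chosen measurably in $(\mathbf{h},\mathbf{h}')$, so that they glue into a coupling of the length-$\ell$ laws. I would invoke the standard measurable-selection result for optimal transport plans (e.g.\ Villani, Cor.~5.22). Alternatively, one can bypass it with $\delta$-optimal couplings and let $\delta\to0$. A gluing-lemma argument is an equivalent route.

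\textbf{Comparison of the bounds.} Under $\varepsilon_{\mathrm{BL}}\le L'\varepsilon_{\mathrm{AR}}$ and $\rho_{\mathrm{BL}}\le\rho$, monotonicity of $A_B$ in $\rho$ gives $\varepsilon_{\mathrm{BL}}A_B(\rho_{\mathrm{BL}})\le\varepsilon_{\mathrm{AR}}L'A_B(\rho)$. It therefore suffices to show $L'A_B(\rho)\le A_L(\rho)$. For $\rho=0$ both sides equal $L$. For $\rho>0$, write $A_L(\rho)=\sum_{j=0}^{L-1}(1+\rho)^j$ and group the indices into $B$ consecutive runs of length $L'$: $A_L(\rho)=\sum_{i=0}^{B-1}\sum_{s=0}^{L'-1}(1+\rho)^{iL'+s}$. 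Since $(1+\rho)^{iL'+s}\ge(1+\rho)^{i}$ (because $iL'+s\ge i$), each run is at least $L'(1+\rho)^i$, and summing over $i$ gives $L'A_B(\rho)$. When $L'>1$, the term with $i=0$, $s=1$ is strictly larger than $(1+\rho)^0=1$, so the inequality is strict, which gives \eqref{eq:accumulation_factor}.
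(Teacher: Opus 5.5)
Your proposal is correct and follows essentially the same route as the paper. Both arguments glue an (approximately) optimal coupling of the prefixes with conditional couplings of the next event or block, then apply the triangle inequality through $\mathsf{P}_\ell(\cdot\mid\mathbf{h}')$ to obtain $e_\ell\le(1+\rho)e_{\ell-1}+\varepsilon$, and unroll the recursion. The differences are minor: you explicitly address the measurable-selection issue, which the paper passes over, and you prove $L'A_B(\rho)\le A_L(\rho)$ by the termwise comparison $(1+\rho)^{iL'+s}\ge(1+\rho)^i$ instead of the paper's exact factorization $A_L(\rho)/A_B(\rho)=\sum_{r=0}^{L'-1}(1+\rho)^{rB}$; both are valid.
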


\Cref{thm:error_accumulation} makes the generation-error accumulation explicit. The bound in \myref{eq:ar_error_bound} shows that, for event-wise autoregressive generation, the local one-event approximation error $\varepsilon_{\mathrm{AR}}$ is multiplied by an accumulation factor $A_L(\rho_{\mathrm{AR}})$ over all $L$ event-level sampling steps. The bound in \myref{eq:block_error_bound} is the block-wise counterpart: the local block approximation error $\varepsilon_{\mathrm{BL}}$ is multiplied by $A_B(\rho_{\mathrm{BL}})$ over only $B=L/L^{\prime}$ block-level sampling steps. Since $A_n(\rho)$ is nondecreasing in $n$ for $\rho\geq 0$ and $B\leq L$, block-wise generation has a shorter accumulation horizon than event-wise generation. Within each block, the $L^{\prime}$ latent event representations are sampled simultaneously by diffusion rather than being recursively fed back one by one, so errors made for earlier events inside the same block do not enter as prefix-level distributional discrepancies for later events in that block.

The comparison in \myref{eq:accumulation_factor} relies on two conditions. The condition $\varepsilon_{\mathrm{BL}}\leq L^{\prime}\varepsilon_{\mathrm{AR}}$ means that learning one block is no worse, in distributional error, than accumulating the $L^{\prime}$ corresponding event-wise local errors. The condition $\rho_{\mathrm{BL}}\leq \rho_{\mathrm{AR}}=\rho$ assumes that the block transition is at least as stable as the event-wise transition with respect to prefix perturbations. Under these comparable-error and comparable-stability conditions, the ratio in \myref{eq:accumulation_factor} is no larger than one, meaning that the block-wise error accumulation upper bound is no larger than its event-wise counterpart. The ratio is strictly smaller when $\rho>0$ and $L^{\prime}>1$. This is consistent with the empirical trade-off observed in \Cref{sec:different block sizes}: increasing $L^{\prime}$ initially improves generation performance by reducing event-by-event recursion, whereas overly large blocks may degrade performance because the block transition distribution becomes higher-dimensional and harder to learn.

If the decoder $g_{\phi}$ is $L_{\mathrm{dec}}$-Lipschitz from $(\mathbb{R}^{L\times D},d_L)$ to an event-space discrepancy $\Delta_L$, then the latent-space result can be transferred to decoded event sequences: for either generator $\mathsf{Q}\in\{\mathsf{Q}^{\mathrm{AR}},\mathsf{Q}^{\mathrm{BL}}\}$,
\begin{equation}\label{eq:decoder_transfer}
W_{\Delta_L}\!\left((g_{\phi})_{\#}\mathsf{P}_{1:L},(g_{\phi})_{\#}\mathsf{Q}_{1:L}\right)
\leq L_{\mathrm{dec}} W_1^L\!\left(\mathsf{P}_{1:L},\mathsf{Q}_{1:L}\right),
\end{equation}
where $(g_{\phi})_{\#}$ denotes the push-forward distribution. Therefore, reducing the latent distributional discrepancy directly reduces decoded event sequence discrepancy up to the decoder Lipschitz constant. The proof is provided in the supplementary material.

\section{Experiments}\label{sec:experiments}

In this section, we conduct extensive experiments to evaluate the performance of LBDTPP, addressing the following major research questions (RQs):
\begin{itemize}[leftmargin=*]
    \item \textbf{RQ1:} How does LBDTPP perform compared to state-of-the-art TPP baselines for the unconditional generation task?
    \item \textbf{RQ2:} How does LBDTPP perform compared to state-of-the-art TPP baselines for the conditional generation task? 
    \item \textbf{RQ3:} What are the sources of performance improvement of LBDTPP over autoregressive and non-autoregressive TPP baselines, and how does the block size affect its performance?
    \item \textbf{RQ4:} Can LBDTPP capture the empirical distributions of event timestamps and marks?
    \item \textbf{RQ5:} How sensitive is LBDTPP with respect to different hyperparameters?
    \item \textbf{RQ6:} How do different model variants, such as utilizing a learnable mark encoder or adopting $\boldsymbol{\epsilon}^b$-prediction, affect the performance of LBDTPP?
    \item \textbf{RQ7:} How does the sampling time of LBDTPP compare to that of baseline models? 
\end{itemize}

\subsection{Experimental Setup}
\textbf{Datasets.} We use six real-world benchmark datasets containing event sequences from multiple domains. \Cref{table:statistics of datasets} summarizes the statistics of these datasets. All datasets are available at the CDiff repository \cite{zeng2024interacting}.

\begin{itemize}[leftmargin=*]
    \item \textbf{Taxi} \cite{taxi} contains time-stamped taxi pick-up and drop-off events throughout the five boroughs of New York city. Each combination of borough, whether it's a pick-up or drop-off, defines a mark, resulting in a total of 10 marks.
    \item \textbf{Taobao} \cite{zhu2018learning} includes time-stamped user click behaviors on the Taobao platform. Each user has a sequence of product click events, where each event contains a timestamp and a product category.
    \item \textbf{StackOverflow} \cite{so} contains user-awarded collections from a question-answering website. Each user is awarded a sequence of badges, with a total of 22 different badge marks.
    \item \textbf{Retweet} \cite{zhou2013learning} consists of sequences of time-stamped user retweet events, categorized into three marks based on the users' following sizes: ``small", ``medium", and ``large".
    \item \textbf{MOOC} \cite{kumar2019predicting} includes records of student interactions within an online course platform. Each type of interaction (e.g., video watching, forum posting) is treated as a distinct mark. We use the same pre-processing approach as in \cite{bosser2023predictive,zeng2024interacting}.
    \item \textbf{Amazon} \cite{ni2019justifying} contains time-stamped user product review behaviors where product categories are seen as event marks.
\end{itemize}

The data pre-processing for the two generation tasks is as follows. (i) For unconditional generation (\Cref{sec:unconditional generation task}), event timestamps within each sequence are normalized to a unified scale by dividing them by the maximum termination time, defined as the maximum last timestamp in the training set. During evaluation, the generated timestamps are mapped back to the original scale for comparison with the test sequences. (ii) For conditional generation (\Cref{sec:conditional generation task}), future events are predicted directly based on historical events in their natural time scale, so no explicit time normalization is applied.

\begin{table}[t]
\renewcommand{\arraystretch}{1}
  \caption{Statistics of each dataset}
  \vspace{-2pt}
  \label{table:statistics of datasets}
  \centering
  \adjustbox{width=0.49\textwidth}{
\begin{tabular}{lrrrrrrr}
\toprule
\multirow{2.5}{*}{Dataset} & \multirow{2.5}{*}{$M$} & \multicolumn{3}{c}{\# Sequences} & \multicolumn{3}{c}{Sequence Length} \\ \cmidrule(lr){3-5} \cmidrule(lr){6-8} 
                         &                           & Train     & Dev       & Test     & Min        & Mean       & Max     \\ \midrule
Taxi                  &        10                  &   1,400  &   200  &  400  &     36     &     37     &    38     \\
Taobao               &     17                     &  1,300   &  200   &  500   &   32       &    58      &     64     \\
StackOverflow                     &            22              & 1,401    & 401    &  401  &    41      &     65     &     101     \\
Retweet                   &   3                       &   6,500  &   731  &  500  &   30       &     53     &     97     \\
MOOC            &      50                    &  1,321   &  333   &  341  &    21      &  45        &   234      \\
Amazon                 &       16                   &   5,200  &  300   &  500  &      30    &        50  &   94       \\ \bottomrule
\end{tabular}}
\end{table}

\textbf{Baselines.} We compare our model with nine TPP baselines, including both autoregressive and non-autoregressive TPPs.

\begin{itemize}[leftmargin=*]
\item \textbf{NHP} \cite{mei2017neural} introduces a continuous-time long short-term memory (LSTM) network for modeling event sequences. The conditional intensity function of NHP is capable of decaying over time, allowing it to flexibly capture the influence of past events on future event occurrences. 

\item \textbf{LNM} \cite{shchurintensity} models the conditional density distribution of TPPs using a log-normal mixture model. This intensity-free approach allows for a more flexible and efficient representation of the temporal dynamics, providing advantages in terms of both expressiveness and ease of sampling.

\item \textbf{THP} \cite{zuo2020transformer} incorporates a Transformer architecture to model the conditional intensity function of TPPs. By leveraging self-attention mechanisms, THP captures long-range dependencies in event sequences, enabling it to model intricate temporal patterns effectively.

\item \textbf{AttNHP} \cite{yang2022transformer} utilizes a Transformer architecture to model event sequences, learning rich embeddings of actual and possible events at any given time, based on lower-level representations of these events and their context.

\item \textbf{S2P2} \cite{chang2025deep} adapts deep state-space models to marked TPPs by combining neural jump stochastic differential equations with nonlinear transformations. This architecture allows the model to efficiently capture continuous-time dynamics and long-range dependencies in event sequences.

\item \textbf{DualTPP} \cite{deshpande2021long} combines two components for long-horizon event forecasting: an autoregressive TPP model that captures short-term event dynamics at a microscopic level and a count model that handles the macroscopic, long-term behavior.

\item \textbf{HYPRO} \cite{xue2022hypro} introduces a hybridly normalized probabilistic model designed for long-horizon prediction of event sequences. It combines an autoregressive base model with an energy function, which reweights the predicted sequences to improve the realism of long-term forecasts.

\item \textbf{TCDDM} \cite{linexploring} designs a generative framework for neural TPPs that adopts a diffusion-based probabilistic decoder. This approach enhances predictive performance by leveraging diffusion models to generate high-quality inter-event times.

\item \textbf{CDiff} \cite{zeng2024interacting} proposes to address the task of long-horizon event forecasting by employing interacting diffusion processes. It introduces two coupled diffusion processes, one for event marks and one for inter-event times, which interact through their respective denoising functions.

\end{itemize}

The specific baseline settings for the two generation tasks are as follows. (i) For unconditional generation, we utilize the EasyTPP benchmark \cite{xue2023easytpp} to evaluate five autoregressive TPP baselines: \textbf{NHP}, \textbf{LNM}, \textbf{THP}, \textbf{AttNHP}, and \textbf{S2P2}. We do not compare against the other four baselines (i.e., \textbf{DualTPP}, \textbf{HYPRO}, \textbf{TCDDM}, and \textbf{CDiff}), because they are specifically designed for conditional generation and cannot be applied to variable-length unconditional generation without significant modifications. (ii) For conditional generation, we compare our model with all nine TPP baselines. We evaluate \textbf{THP} and \textbf{S2P2} with the EasyTPP benchmark, while the results of the other seven baselines are taken from \cite{zeng2024interacting}.

\begin{table*}[thb!]
    \centering
    \renewcommand{\arraystretch}{0.98}
    \caption{$\textbf{OTD}$ and $\textbf{RMSE}_{m}$ of unconditional generation reported in mean $\pm$ s.d. \!\colorbox{bestcell}{Best} and \!\colorbox{secondcell}{second best} are highlighted}
    \vspace{-3pt}
    \label{table: unconditional generation OTD and RMSE comparison}
    
    \adjustbox{width=0.9\textwidth}{
    \begin{tabular}{lcc|cc|cc} \toprule
    &\multicolumn{2}{c}{\textbf{Taxi}}&\multicolumn{2}{c}{\textbf{Taobao}}&\multicolumn{2}{c}{\textbf{StackOverflow}}\\ 
        & $\textbf{OTD}$ & $\textbf{RMSE}_{m}$ & $\textbf{OTD}$ & $\textbf{RMSE}_{m}$& $\textbf{OTD}$ & $\textbf{RMSE}_{m}$\\ \midrule

          	\textbf{NHP}        & 67.897 $\pm$ 0.115 & \cellcolor{secondcell}5.499 $\pm$ 0.030 & 118.233 $\pm$ 0.415 & \cellcolor{secondcell}7.329 $\pm$ 0.018 & 161.867 $\pm$ 0.378 & 6.083 $\pm$ 0.027 \\  
          	\textbf{LNM}        & \cellcolor{secondcell}58.004 $\pm$ 0.127 & 6.619 $\pm$ 0.021 & 106.362 $\pm$ 0.113 & 8.483 $\pm$ 0.009 & \cellcolor{secondcell}140.620 $\pm$ 0.510 & 7.609 $\pm$ 0.048 \\
            \textbf{THP}        & 63.449 $\pm$ 0.074 & 6.867 $\pm$  0.012 & 124.732 $\pm$ 0.462  & 7.349 $\pm$ 0.033  &  162.674 $\pm$ 0.347  & 7.729 $\pm$ 0.055   \\
            \textbf{AttNHP}     & 71.618 $\pm$ 0.334 & 5.762 $\pm$ 0.009 & 129.901 $\pm$ 0.503 & 7.545 $\pm$ 0.086 & 166.370 $\pm$ 0.449 & 6.373 $\pm$ 0.049 \\
            \textbf{S2P2}        & 66.264 $\pm$ 0.230 & 6.284 $\pm$ 0.042 & \cellcolor{secondcell}104.277 $\pm$ 0.388 & 7.513 $\pm$ 0.023 & 150.335 $\pm$ 0.249 & \cellcolor{secondcell}5.579 $\pm$ 0.050   \\
          	\midrule      
        	\textbf{LBDTPP}      & \cellcolor{bestcell}39.458 $\pm$ 0.843 & \cellcolor{bestcell}2.985 $\pm$ 0.196 & \cellcolor{bestcell}98.965 $\pm$ 0.608 & \cellcolor{bestcell}7.192 $\pm$ 0.069 & \cellcolor{bestcell}128.427 $\pm$ 0.981 & \cellcolor{bestcell}5.007 $\pm$ 0.081 \\ \bottomrule

        \addlinespace[2pt]
         &\multicolumn{2}{c}{\textbf{Retweet}}&\multicolumn{2}{c}{\textbf{MOOC}}&\multicolumn{2}{c}{\textbf{Amazon}}\\ 
        & $\textbf{OTD}$ & $\textbf{RMSE}_{m}$ & $\textbf{OTD}$ & $\textbf{RMSE}_{m}$& $\textbf{OTD}$ & $\textbf{RMSE}_{m}$\\ \midrule

          	\textbf{NHP}        & \cellcolor{secondcell}96.343 $\pm$ 0.233 & 20.610 $\pm$ 0.063 & 110.819 $\pm$ 0.307 & 2.895 $\pm$ 0.021 & 90.095 $\pm$ 0.160 & 5.591 $\pm$ 0.018 \\  
          	\textbf{LNM}        & 96.695 $\pm$ 0.594 & 21.270 $\pm$ 0.168 & \cellcolor{secondcell}72.895 $\pm$ 0.215 & \cellcolor{secondcell}1.480 $\pm$ 0.003 & 87.689 $\pm$ 0.171 & \cellcolor{secondcell}5.219 $\pm$ 0.026 \\
            \textbf{THP}        & 97.039 $\pm$ 0.144 & 21.448 $\pm$ 0.037  & 116.870 $\pm$ 0.209  & 4.318 $\pm$ 0.015  & 88.990 $\pm$ 0.097  & 5.497 $\pm$ 0.013   \\
            \textbf{AttNHP}     & 96.624 $\pm$ 0.333 & \cellcolor{secondcell}20.575 $\pm$ 0.057 & 116.978 $\pm$ 0.377 & 4.345 $\pm$ 0.029 & \cellcolor{secondcell}87.238 $\pm$ 0.133 & 5.565 $\pm$ 0.006 \\ 
            \textbf{S2P2}        & 97.502 $\pm$ 0.268 & 20.955 $\pm$ 0.010 & 114.752 $\pm$ 0.768 & 2.887 $\pm$ 0.043 & 87.553 $\pm$ 0.231 &  5.348 $\pm$ 0.019  \\
          	\midrule      
        	\textbf{LBDTPP}      & \cellcolor{bestcell}95.790 $\pm$ 0.754 & \cellcolor{bestcell}20.402 $\pm$ 0.252 & \cellcolor{bestcell}64.943 $\pm$ 0.266 & \cellcolor{bestcell}1.407 $\pm$ 0.007 & \cellcolor{bestcell}80.914 $\pm$ 0.564 & \cellcolor{bestcell}5.008 $\pm$ 0.007 \\ \bottomrule
    \end{tabular}}
\end{table*}
\textbf{Evaluation Metrics.} We adopt four commonly used metrics to evaluate the quality of the generated marked event sequences. In this work, we do not report density-based metrics such as log-likelihood because several diffusion TPP baselines and our LBDTPP model, are implicit generators or operate in latent space, making event-space log-likelihoods unavailable or not directly comparable.

\begin{itemize}[leftmargin=*]
    
    \item \textbf{OTD}: The optimal transport distance between two marked event sequences, which defines the minimum cost required to edit a generated event sequence into the ground truth event sequence, measuring the sequence-level similarity between them. We report the average values of the OTD across different values of the deletion/insertion cost constant $C$: $\{0.05, 0.5, 1, 1.5, 2, 3, 4\}$. More details on this OTD metric can be found in \cite{mei2019imputing}.

    \item $\textbf{RMSE}_{m}$: The root mean square error of the number of events for each mark, which quantifies how well the event mark distribution of the generated sequence matches that of the ground truth sequence. For $m\in[M]$, we compute the count of events of mark $m$ in the generated sequence, $\hat{C}_m$, and the true sequence, $C_m$. This metric is then calculated as $\sqrt{\frac{1}{M} \sum_{m=1}^M(C_m-\hat{C}_m)^2}$.
    
    \item $\textbf{RMSE}_{\tau}$: The root mean square error between the inter-event times of the generated sequence and those of the ground truth sequence, which assesses the temporal accuracy of the generated events. For a generated sequence with inter-event times $\hat{\tau}_1, \ldots, \hat{\tau}_{H}$ and a ground truth sequence with inter-event times $\tau_1, \ldots, \tau_{H}$, we compute $\text{RMSE}_{\tau}=\sqrt{\frac{1}{H} \sum_{i=1}^{H}(\tau_i-\hat{\tau}_i)^2}$, where $H$ is the number of inter-event times.
    \item \textbf{sMAPE}: The symmetric mean absolute percentage error between the inter-event times of the generated sequence and those of the ground truth sequence, which also evaluates the temporal accuracy. It is defined as $\text{sMAPE}=\frac{100}{H} \sum_{i=1}^{H} \frac{2|\tau_i-\hat{\tau}_i|}{|\tau_i|+|\hat{\tau}_i|}$.
\end{itemize}

Since $\textbf{RMSE}_{\tau}$ and \textbf{sMAPE} metrics require the generated and ground-truth sequences to have the same number of events, they are evaluated only in the fixed-length conditional generation benchmark \cite{zeng2024interacting} (\Cref{sec:conditional generation task}), where the number of future events is fixed to $H=20$. In contrast, \textbf{OTD} and $\textbf{RMSE}_{m}$ are evaluated in both unconditional and conditional generation benchmarks, as they can be computed between sequences with either different or equal length. For each run, all metrics are averaged over the full test set. Each model is trained with 10 different random seeds, and the mean and standard deviation (s.d.) of each metric are reported in this work.

\textbf{Implementation Details.} As described in \Cref{sec:model architecture}, the event encoder contains no learnable parameters, the clean latent block $\textbf{z}^b$ is predicted by a Transformer equipped with the block diffusion attention mask \cite{arriolablock}, and the time and mark decoders are implemented as two-layer MLPs. For both unconditional and conditional generation tasks, we set the forward diffusion steps to $K=100$ and use the DDIM sampler \cite{songdenoising} with $S=50$ sampling steps to accelerate sampling. All experiments are conducted on an NVIDIA GeForce RTX 3090 GPU with 24GB of memory. We implement our model using PyTorch \cite{paszke2017automatic}.

\textbf{Training Details.} We set the maximum number of epochs to 50 for all experiments and evaluate the model every 2 epochs on the validation set, selecting the one with the best performance for testing. For both training and evaluation, the batch size is fixed at 32 for all datasets. The Adam optimizer \cite{kingma2014adam} is employed for model optimization.

\textbf{Hyperparameter Setting.} We perform grid search to determine the hyperparameters for LBDTPP based on the validation set. Specifically, we tune the learning rate from $\{0.0001, 0.0005, 0.001, 0.005, 0.01\}$, the embedding dimension $D$ from $\{16,32,64,128\}$, the number of attention heads from $\{1, 2, 4\}$, and the number of Transformer layers from $\{1, 2, 4\}$. In our initial experiments, setting the weight $\lambda$ in the loss function \myref{equ: overall loss} to $1$ already yields stronger performance than the baselines, so we fix this value in all main experiments. We also analyze the sensitivity of $\lambda$ in \Cref{sec:different sampling steps}.

\subsection{Unconditional Generation Task (\textbf{A1})}\label{sec:unconditional generation task}
We first evaluate the performance of our LBDTPP model on the unconditional generation task, where the objective is to generate new event sequences that align well with the underlying data distribution. This task serves as a fundamental benchmark for assessing the fitting capacity and generation quality of different TPP models. A model with strong performance in this setting can also benefit downstream applications, such as system simulation and data augmentation \cite{kerrigan2024eventflow}. Compared with prior settings, unconditional generation for marked event sequences has received limited attention. Existing studies \cite{ludke2023add,kerrigan2024eventflow,zeng2024interacting} have primarily focused on generation of unmarked event sequences in unconditional and conditional settings, or on conditional generation of marked event sequences. We therefore include this setting as an important evaluation scenario to assess whether a model can jointly capture temporal dynamics and mark distributions without historical conditioning.

Since there are no ground truth event sequences available in unconditional generation, we generate event sequences with the same termination times as those in the test set. We then calculate the OTD and $\text{RMSE}_{m}$ metrics between the generated and test sequences to evaluate how well the generated sequences follow the underlying data distribution. Under this setting, the generated sequences and the test sequences generally contain different numbers of events. For all datasets, we set the block size $L^{\prime}=8$ in our LBDTPP model, meaning that each block contains 8 events.

\textbf{Results.} \cref{table: unconditional generation OTD and RMSE comparison} summarizes the OTD and $\text{RMSE}_{m}$ results of unconditional generation. We observe that LBDTPP consistently outperforms all five TPP baselines across six real-world datasets in terms of both metrics. This shows the superior capability of LBDTPP in capturing the underlying distribution of event sequences. Moreover, the performance gap between LBDTPP and these autoregressive TPPs also highlights the effectiveness of our latent block diffusion modeling approach in mitigating error accumulation and generating high-fidelity event sequences.

\begin{table*}[thb!]
    \centering
    \renewcommand{\arraystretch}{1}
    \caption{$\textbf{OTD}$, $\textbf{RMSE}_{m}$, $\textbf{RMSE}_{\tau}$ and \textbf{sMAPE} of conditional generation reported in mean $\pm$ s.d. \colorbox{bestcell}{Best} and \colorbox{secondcell}{second best} are highlighted}
    \vspace{-3pt}
    \label{table: conditional generation OTD and RMSE comparison}
    
    \adjustbox{width=\textwidth}{
    \begin{tabular}{lcccc|cccc} \toprule
    &\multicolumn{4}{c}{\textbf{Taxi}}&\multicolumn{4}{c}{\textbf{Taobao}}\\ 
        & $\textbf{OTD}$ & $\textbf{RMSE}_{m}$ &$\textbf{RMSE}_{\tau}$ & \textbf{sMAPE} & $\textbf{OTD}$ & $\textbf{RMSE}_{m}$&$\textbf{RMSE}_{\tau}$ & \textbf{sMAPE}\\ \midrule
            
          	\textbf{NHP}        & 25.114 $\pm$ 0.268 & 1.297 $\pm$ 0.019 & 0.399 $\pm$ 0.040 & 96.459 $\pm$ 0.521 & 48.131 $\pm$ 0.297 & 3.355 $\pm$ 0.030 & 0.837 $\pm$ 0.009 & 137.644 $\pm$ 0.764\\  
          	\textbf{LNM}        & 24.053 $\pm$ 0.609 & 1.364 $\pm$ 0.032 & 0.384 $\pm$ 0.005 & 95.719 $\pm$ 0.779 & 45.757 $\pm$ 0.287 & 3.193 $\pm$ 0.043 & 0.575 $\pm$ 0.012 & 127.436 $\pm$ 0.606 \\
            \textbf{THP}        & 25.981 $\pm$ 0.286 & 1.983 $\pm$ 0.033 & 0.350 $\pm$ 0.001 & 95.995 $\pm$ 0.410 & 48.776 $\pm$ 0.065 & 3.478 $\pm$ 0.009 & \cellcolor{secondcell}0.429 $\pm$ 0.002 & 162.474 $\pm$ 0.280 \\
            \textbf{AttNHP}     & 24.762 $\pm$ 0.217 & 1.276 $\pm$ 0.015 & 0.430 $\pm$ 0.003 & 97.388 $\pm$ 0.381 & 45.555 $\pm$ 0.345 & 2.737 $\pm$ 0.021 & 0.708 $\pm$ 0.010 & 134.582 $\pm$ 0.920 \\ 
            \textbf{S2P2}        & 21.531 $\pm$ 0.130 & 1.415 $\pm$ 0.022 & \cellcolor{secondcell}0.332 $\pm$ 0.002 & 95.593 $\pm$ 0.274 & 47.797 $\pm$ 0.187 & 3.249 $\pm$ 0.005 & 0.574 $\pm$ 0.011 & 149.758  $\pm$ 0.305\\
            \textbf{DualTPP}    & 24.483 $\pm$ 0.383 & 1.353 $\pm$ 0.037 & 0.402 $\pm$ 0.006 & 95.211 $\pm$ 0.187 & 47.324 $\pm$ 0.541 & 3.237 $\pm$ 0.049 & 0.871 $\pm$ 0.005 & 141.687 $\pm$ 0.431\\
          	\textbf{HYPRO}      & 21.653 $\pm$ 0.163 & 1.231 $\pm$ 0.015 & 0.372 $\pm$ 0.004 & 93.803 $\pm$ 0.454 & \cellcolor{secondcell}44.336 $\pm$ 0.127 & 2.710 $\pm$ 0.021 & 0.594 $\pm$ 0.030 & 134.922 $\pm$ 0.473 \\ 
          	\textbf{TCDDM}      & 22.148 $\pm$ 0.529 & 1.309 $\pm$ 0.030 & 0.382 $\pm$ 0.019 & 90.596 $\pm$ 0.574 & 45.563 $\pm$ 0.889 & 2.850 $\pm$ 0.058 & 0.569 $\pm$ 0.015 & \cellcolor{secondcell}126.512 $\pm$ 0.491 \\      
        	\textbf{CDiff}      & \cellcolor{secondcell}21.013 $\pm$ 0.158 & \cellcolor{secondcell}1.131 $\pm$ 0.017 & 0.351 $\pm$ 0.004 & \cellcolor{secondcell}87.993 $\pm$ 0.178 & 44.621 $\pm$ 0.139 & \cellcolor{secondcell}2.653 $\pm$ 0.022 & 0.551 $\pm$ 0.002 & \cellcolor{bestcell}125.685 $\pm$ 0.151 \\ \midrule      
        	\textbf{LBDTPP}      & \cellcolor{bestcell}19.258 $\pm$ 0.541 & \cellcolor{bestcell}0.993 $\pm$ 0.085 & \cellcolor{bestcell}0.270 $\pm$ 0.003 & \cellcolor{bestcell}74.171 $\pm$ 0.723 & \cellcolor{bestcell}41.377 $\pm$ 0.460 & \cellcolor{bestcell}2.105 $\pm$ 0.036 & \cellcolor{bestcell}0.407 $\pm$ 0.011 & 158.514 $\pm$ 0.841 \\ \bottomrule

        \addlinespace[3pt]
         &\multicolumn{4}{c}{\textbf{StackOverflow}}&\multicolumn{4}{c}{\textbf{Retweet}} \\
         & $\textbf{OTD}$ & $\textbf{RMSE}_{m}$&$\textbf{RMSE}_{\tau}$ & \textbf{sMAPE} & $\textbf{OTD}$ & $\textbf{RMSE}_{m}$&  $\textbf{RMSE}_{\tau}$ & \textbf{sMAPE}\\ \midrule
              	
              	\textbf{NHP} & 43.791 $\pm$ 0.147 & 1.244 $\pm$ 0.030 & 1.487 $\pm$ 0.004 & 116.952 $\pm$ 0.404 & 60.953 $\pm$ 0.079 & 2.651 $\pm$ 0.045 & 27.130 $\pm$ 0.224 & 107.075 $\pm$ 1.398\\ 
              	\textbf{LNM} & 46.280 $\pm$ 0.892 & 1.447 $\pm$ 0.057 & 1.669 $\pm$ 0.005 & 115.122 $\pm$ 0.627 & 61.715 $\pm$ 0.152 & 2.776 $\pm$ 0.043 & 27.582 $\pm$ 0.191 & 106.711 $\pm$ 1.615  \\
                \textbf{THP}        & 54.317 $\pm$ 0.087 & 3.043 $\pm$ 0.001 & 1.413 $\pm$ 0.002 & 115.735 $\pm$ 0.340 & \cellcolor{secondcell}60.417 $\pm$ 0.094 & 3.704 $\pm$ 0.045 & \cellcolor{secondcell}26.432  $\pm$ 0.024 & 134.061 $\pm$ 0.406\\
                \textbf{AttNHP} & 42.591 $\pm$ 0.408 & 1.142 $\pm$ 0.011 & 1.340 $\pm$ 0.006 & 108.542 $\pm$ 0.531 & 60.634 $\pm$ 0.097 & 2.561 $\pm$ 0.054 & 28.812 $\pm$ 0.272 & 107.234 $\pm$ 1.293 \\
                \textbf{S2P2}        & 54.635 $\pm$ 0.066 & 2.912 $\pm$ 0.007& \cellcolor{secondcell}1.151 $\pm$ 0.003& 116.653 $\pm$ 0.129 & 61.496 $\pm$ 0.103& 4.756 $\pm$ 0.044 & 26.796 $\pm$ 0.030 & 140.407 $\pm$ 0.500 \\
                \textbf{DualTPP} & 41.752 $\pm$ 0.200 & \cellcolor{secondcell}1.134 $\pm$ 0.019 & 1.514 $\pm$ 0.017 & 117.582 $\pm$ 0.420 & 61.095 $\pm$ 0.101 & 2.679 $\pm$ 0.026 & 28.914 $\pm$ 0.300 & 106.900 $\pm$ 1.293\\
              	\textbf{HYPRO} & 42.359 $\pm$ 0.170 & 1.140 $\pm$ 0.014 & 1.554 $\pm$ 0.010 & 110.988 $\pm$ 0.559 & 61.031 $\pm$ 0.092 & 2.623 $\pm$ 0.036 & 30.100 $\pm$ 0.413 & 106.110 $\pm$ 1.505\\
              	\textbf{TCDDM} & 42.128 $\pm$ 0.591 & 1.467 $\pm$ 0.014 & 1.315 $\pm$ 0.004 & 107.659 $\pm$ 0.934 & 60.501 $\pm$ 0.087 & \cellcolor{secondcell}2.387 $\pm$ 0.050 & 27.303 $\pm$ 0.152 & \cellcolor{secondcell}106.048 $\pm$ 0.610 \\ 
              	\textbf{CDiff} & \cellcolor{secondcell}41.245 $\pm$ 1.400 & 1.141 $\pm$ 0.007 & 1.199 $\pm$ 0.006 & \cellcolor{secondcell}106.175 $\pm$ 0.340 & 60.661 $\pm$ 0.101 & \cellcolor{bestcell}2.293 $\pm$ 0.034 & 27.101 $\pm$ 0.113 & 106.184 $\pm$ 1.121\\ \midrule
              	\textbf{LBDTPP} & \cellcolor{bestcell}40.969 $\pm$ 0.452 & \cellcolor{bestcell}1.115 $\pm$ 0.010 & \cellcolor{bestcell}1.070 $\pm$ 0.015 & \cellcolor{bestcell}90.975 $\pm$ 0.155 & \cellcolor{bestcell}59.963 $\pm$ 0.632 & 2.503 $\pm$ 0.078 & \cellcolor{bestcell}22.520 $\pm$ 0.305 & \cellcolor{bestcell}89.209 $\pm$ 0.913\\\bottomrule

         \addlinespace[3pt]
          &\multicolumn{4}{c}{\textbf{MOOC}}&\multicolumn{4}{c}{\textbf{Amazon}} \\
         & $\textbf{OTD}$ & $\textbf{RMSE}_{m}$&$\textbf{RMSE}_{\tau}$ & \textbf{sMAPE} & $\textbf{OTD}$ & $\textbf{RMSE}_{m}$&  $\textbf{RMSE}_{\tau}$ & \textbf{sMAPE}\\ \midrule
              	
              	\textbf{NHP} &51.277 $\pm$ 1.768 & 1.458 $\pm$ 0.063 & 0.442 $\pm$ 0.007 & 148.913 $\pm$ 11.628 & 42.571 $\pm$ 0.293 & 2.561 $\pm$ 0.060 & 0.519 $\pm$ 0.023 & 92.053 $\pm$ 1.553 \\ 
              	\textbf{LNM} & 52.890 $\pm$ 1.151 & 1.428 $\pm$ 0.061 & 0.454 $\pm$ 0.008 & 149.987 $\pm$ 16.581 & 43.820 $\pm$ 0.232 & 3.050 $\pm$ 0.286 & 0.481 $\pm$ 0.145 & 90.910 $\pm$ 1.611  \\
                \textbf{THP}        & 68.063 $\pm$ 0.420 & 2.927 $\pm$ 0.070 & 0.530 $\pm$ 0.018 & 178.904 $\pm$ 1.893 & 67.924 $\pm$ 0.018& 5.562 $\pm$ 0.001 & 0.504 $\pm$ 0.002& 128.526 $\pm$ 0.461\\
                \textbf{AttNHP} & 49.121 $\pm$ 0.720 & 1.297 $\pm$ 0.049 & 0.420 $\pm$ 0.009 & 147.756 $\pm$ 4.812 & 39.480 $\pm$ 0.326 & 2.166 $\pm$ 0.026 & 0.476 $\pm$ 0.033 & 84.323 $\pm$ 1.815 \\
                \textbf{S2P2}        & 58.297 $\pm$ 0.281 &  1.287 $\pm$ 0.010 & 0.475 $\pm$ 0.002 & 169.554 $\pm$ 2.246 & 45.546 $\pm$ 0.118 & 3.049 $\pm$ 0.016 & 0.505  $\pm$ 0.005 & 103.734 $\pm$ 0.872 \\
                \textbf{DualTPP} & 50.184 $\pm$ 1.127 & 1.312 $\pm$ 0.019 & 0.435 $\pm$ 0.006 & 147.003 $\pm$ 2.908 & 42.646 $\pm$ 0.752 & 2.562 $\pm$ 0.202 & 0.482 $\pm$ 0.012 & 86.453 $\pm$ 2.044  \\
              	\textbf{HYPRO} & 48.621 $\pm$ 0.352 & 1.169 $\pm$ 0.094 & \cellcolor{secondcell}0.410 $\pm$ 0.005 & \cellcolor{bestcell}143.045 $\pm$ 7.992 & 38.613 $\pm$ 0.536 & \cellcolor{secondcell}2.007 $\pm$ 0.054 & 0.477 $\pm$ 0.010 & 82.506 $\pm$ 0.840  \\
              	\textbf{TCDDM} & 50.739 $\pm$ 0.765 & 1.407 $\pm$ 0.112 & 0.429 $\pm$ 0.015 & \cellcolor{secondcell}145.745 $\pm$ 11.835 & 42.245 $\pm$ 0.174 & 2.998 $\pm$ 0.115 & 0.476 $\pm$ 0.111 & 83.826 $\pm$ 1.508  \\ 
              	\textbf{CDiff} & \cellcolor{secondcell}47.214 $\pm$ 0.628 & \cellcolor{secondcell}1.095 $\pm$ 0.048 & 0.411 $\pm$ 0.009 & 146.361 $\pm$ 14.837 & \cellcolor{bestcell}37.728 $\pm$ 0.199 & 2.091 $\pm$ 0.163 & \cellcolor{secondcell}0.464 $\pm$ 0.086 & \cellcolor{secondcell}81.987 $\pm$ 1.905 \\\midrule
              	\textbf{LBDTPP} & \cellcolor{bestcell}46.633 $\pm$ 1.178 & \cellcolor{bestcell}1.058 $\pm$ 0.009 & \cellcolor{bestcell}0.262 $\pm$ 0.027 & 167.809 $\pm$ 2.039 & \cellcolor{secondcell}38.237 $\pm$ 0.297 & \cellcolor{bestcell}1.822 $\pm$ 0.086 & \cellcolor{bestcell}0.352 $\pm$ 0.019 & \cellcolor{bestcell}77.285 $\pm$ 1.744 \\
         \bottomrule
    \end{tabular}}
\end{table*}
\subsection{Conditional Generation Task (\textbf{A2})}\label{sec:conditional generation task}
Predicting future event occurrences based on historical observations is a crucial task in various real-world applications, including medical diagnosis and financial transactions. Below we compare the forecasting performance of LBDTPP with nine autoregressive and non-autoregressive TPP models. We follow the experimental setup of prior work \cite{zeng2024interacting}, predicting the last 20 events of each sequence based on preceding events (i.e., $H=20$). Metrics are computed by comparing the generated future sequences with the ground truth, and the block size $L^{\prime}$ is set to 4 for all datasets.

\textbf{Results.} The conditional generation results are presented in \cref{table: conditional generation OTD and RMSE comparison}. Across the six datasets, LBDTPP demonstrates the superior overall performance compared with all autoregressive and non-autoregressive TPP baselines. Specifically, our model achieves the best OTD and $\text{RMSE}_{m}$ scores on 5 out of 6 datasets, the best $\text{RMSE}_{\tau}$ on all 6 datasets, and the best sMAPE on 4 out of 6 datasets, while performing comparably on the remaining cases. Its advantage over autoregressive TPP baselines indicates that generating multiple events simultaneously within each block can effectively mitigate error accumulation caused by event-by-event autoregressive generation, leading to more accurate future event predictions. Moreover, LBDTPP outperforms CDiff, the current state-of-the-art diffusion-based TPP baseline, in most cases. This demonstrates that our latent block diffusion approach can more accurately capture the conditional distribution of future sequences given historical observations and produce reliable event forecasts.

\subsection{Sources of Performance Improvement (\textbf{A3})}\label{sec:different block sizes}

In this subsection, we empirically analyze the sources of LBDTPP's performance improvement over baseline methods. The experimental settings are kept consistent with those in \Cref{sec:conditional generation task}, with only the block size $L^{\prime}$ varied among the set $\{1, 2, 4, 8, 16, 20\}$. Through this study, we find that the performance gain of LBDTPP mainly comes from two factors: latent-space diffusion and block-wise generation.

The results of our LBDTPP model under different block sizes are presented in \Cref{fig:blocksize_performance}. We observe that, on most datasets, both OTD and $\text{RMSE}_{m}$ metrics first decrease and then increase as $L^{\prime}$ becomes larger, indicating that the generation quality first improves and then degrades. Moreover, regardless of the block size, LBDTPP generally outperforms the baseline methods reported in \Cref{table: conditional generation OTD and RMSE comparison}. Below, a more detailed analysis explains where the improvement comes from. When $L^{\prime}=1$, LBDTPP reduces to an autoregressive generation paradigm, where events are generated one by one. Even under this setting, LBDTPP still performs better than autoregressive baselines, which demonstrates the benefit of latent-space diffusion. As $L^{\prime}$ increases from 1, LBDTPP generates multiple events in parallel within each block, and its performance improves accordingly. This shows the advantage of block-wise generation, which mitigates the error accumulation issue caused by strictly autoregressive event-by-event sampling.

\begin{figure*}[t]
    \centering
    \includegraphics[width=\linewidth]{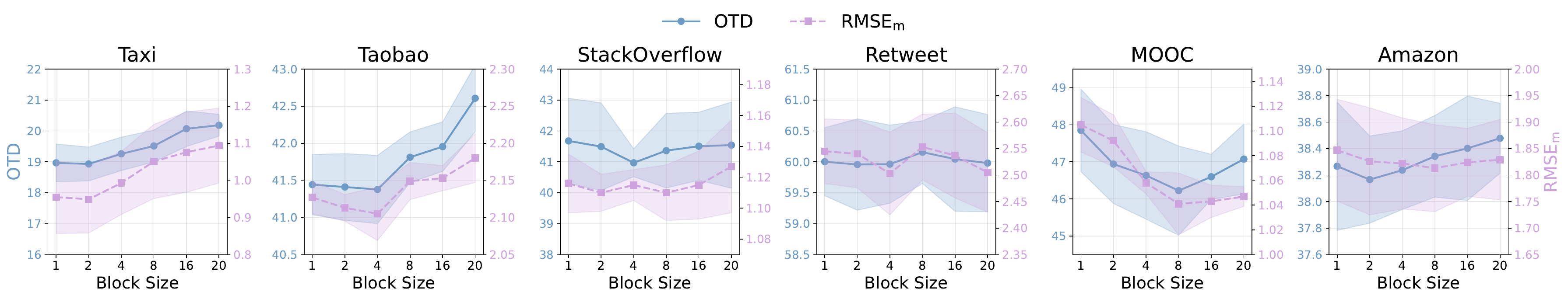}
    \vspace{-15pt}
    \caption{Performance of LBDTPP under different block sizes ($L^{\prime}$) for conditional generation. The left y-axis represents OTD and the right y-axis represents $\text{RMSE}_{m}$. The shaded regions represent the standard deviation.}
    \label{fig:blocksize_performance}
\end{figure*}

\begin{figure}[t] 
    \centering
    \includegraphics[width=0.95\linewidth]{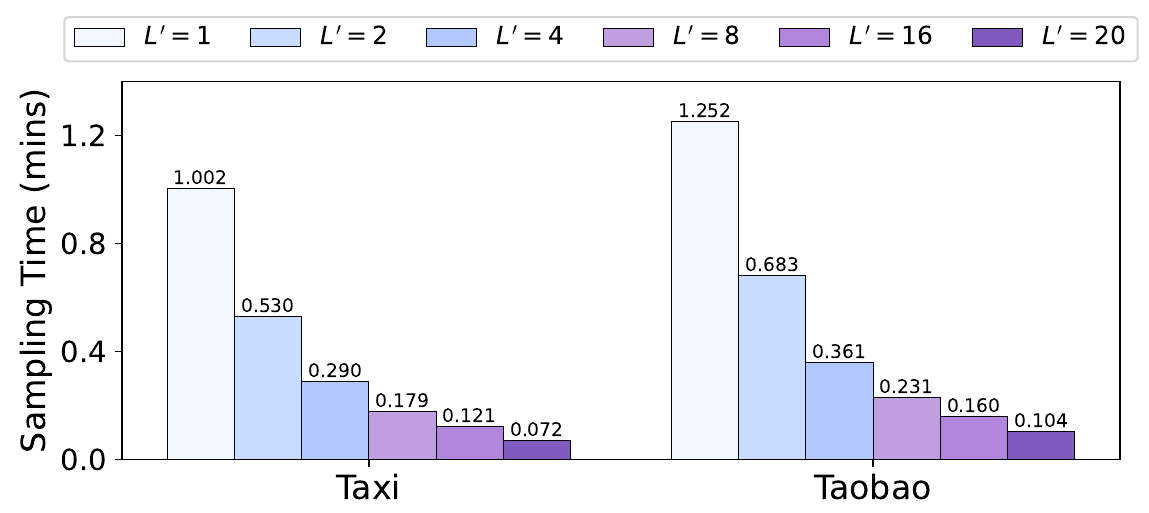}
     \vspace{-2pt}
    \caption{Impact of block size ($L^{\prime}$) on LBDTPP sampling time for conditional generation on Taxi and Taobao datasets.} 
    \label{fig:blocksize_time} 
\end{figure}

This benefit, however, does not keep increasing with larger $L^{\prime}$. Overly large event blocks lead to a coarser factorization of the sequence distribution in \myref{equ:block factorization}, making the conditional distribution of each block harder to learn. They also make the denoising problem more challenging, since each diffusion step needs to jointly recover higher-dimensional event block representations. Nevertheless, when $L^{\prime}=20$, LBDTPP becomes a non-autoregressive model that generates all 20 future events in one shot, and it still outperforms non-autoregressive baselines in most cases. To be specific, compared with TCDDM and CDiff, which perform diffusion directly in the raw event space, the superior performance of LBDTPP further confirms the benefit of performing diffusion in latent space. 

This performance trend as the block size changes is consistent with \Cref{thm:error_accumulation}. Although the theorem is stated for unconditional generation, the same intuition applies to conditional generation after fixing the observed history. Increasing $L^{\prime}$ reduces the number of recursive transitions needed to generate the 20 future events, and thus can reduce prefix-level error accumulation. At the same time, this advantage requires the block-level approximation error and stability to remain well controlled as the block size increases. When $L^{\prime}$ becomes too large, the harder block distribution and denoising problem can increase the local block error, which explains the degradation observed after the optimal block size.

To sum up, these results suggest that the improvement of our LBDTPP model comes from the combination of latent-space diffusion and block-wise generation. We also report the sampling time on the entire test set for different block sizes on the Taxi and Taobao datasets in \Cref{fig:blocksize_time}. The sampling time decreases as the block size increases, which is expected because larger blocks allow the model to generate more events in each sampling round and thus reduce the total number of block sampling rounds. In practice, we recommend using block size $L^{\prime}=4$ when generation quality is the primary concern. If faster generation is preferred, larger block sizes can also be used, since the performance degradation of LBDTPP is relatively mild.


\subsection{Distribution Evaluation (\textbf{A4})}\label{sec:distribution evaluation}

We further evaluate whether LBDTPP captures the empirical distributions of event timestamps and marks in the conditional generation task. As shown in \Cref{fig:distribution_evaluation}, we plot the empirical density of inter-event times and the empirical frequency distribution of event marks for both the ground-truth future events and the events generated by LBDTPP. The results show that the generated inter-event times closely follow the distribution of the true future events, and the generated event marks also match the real mark distribution well. These observations indicate that LBDTPP can effectively capture both temporal and categorical distributional patterns, demonstrating its strong generative forecasting ability beyond standard evaluation metrics.

\begin{figure*}[t]
    \centering
    \includegraphics[width=\linewidth]{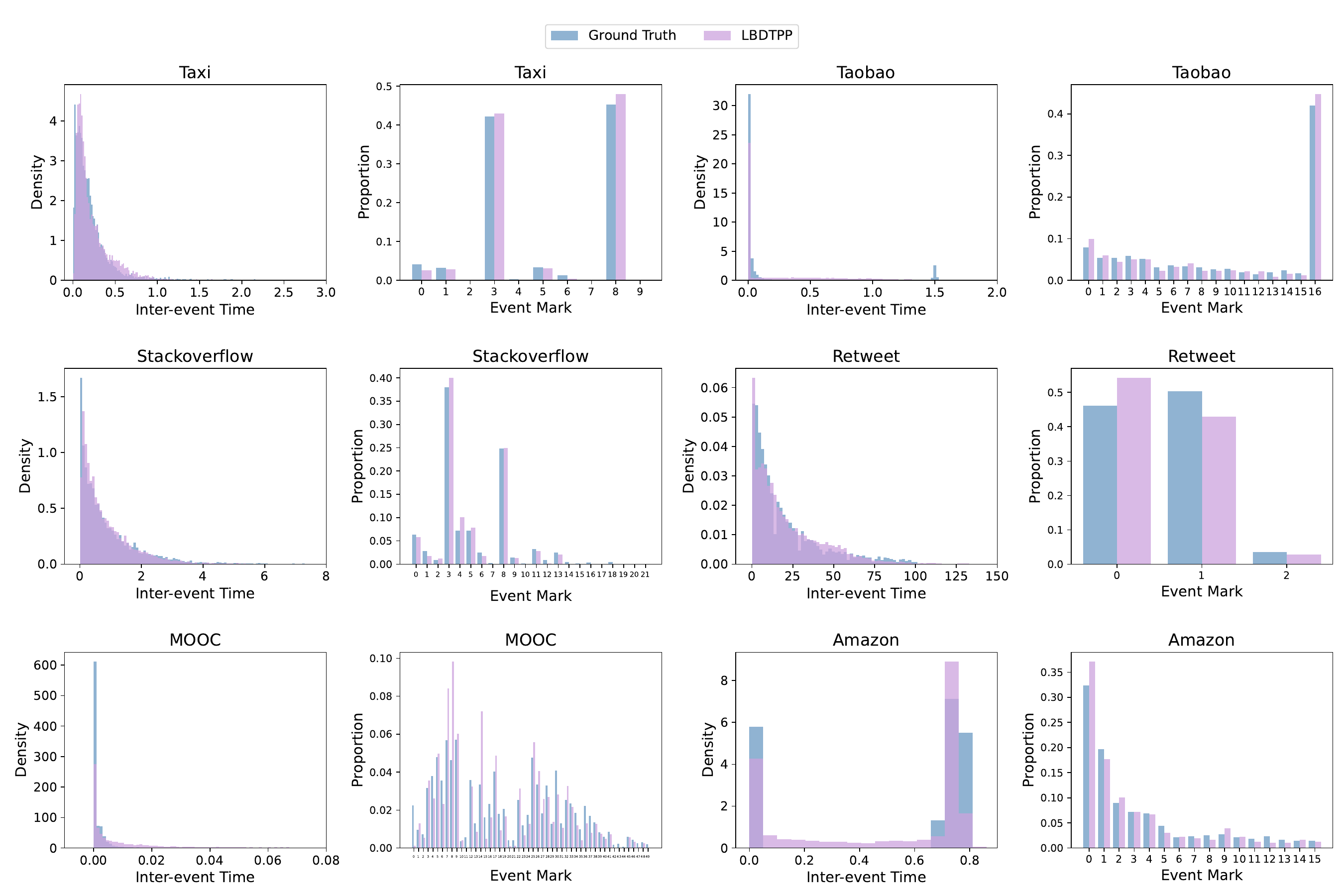}
    \vspace{-15pt}
    \caption{Distribution evaluation of LBDTPP for conditional generation. We plot the empirical density of inter-event times and the empirical frequency distribution of event marks for both the ground-truth future events and the events generated by LBDTPP.}
    \label{fig:distribution_evaluation}
\end{figure*}


\subsection{Hyperparameter Sensitivity (\textbf{A5})}\label{sec:different sampling steps}

We conduct a sensitivity analysis on the number of sampling steps $S$ in the DDIM sampler for conditional generation on Taxi and Taobao datasets. We maintain the same experimental settings as in \Cref{sec:conditional generation task}, altering only $S$ from the set $\{10, 20, 30, 40, 50\}$. The corresponding OTD and $\text{RMSE}_{m}$ trends are illustrated in \Cref{fig:samplingsteps_performance}. To be specific, on the Taxi dataset, LBDTPP achieves OTD scores ranging from 19.039 to 19.258 and $\text{RMSE}_{m}$ scores between 0.967 and 0.993 across different sampling steps. On the Taobao dataset, LBDTPP attains OTD scores from 41.210 to 41.642 and $\text{RMSE}_{m}$ scores between 2.103  and 2.136. From these results, we observe that LBDTPP consistently sustains superior performance compared to the baseline methods in \Cref{table: conditional generation OTD and RMSE comparison}, even with as few as 10 sampling steps. This indicates that LBDTPP can effectively generate high-quality event sequences using a small number of sampling iterations, highlighting its efficiency and enhancing its practicality for real-world applications. Besides, from \Cref{fig:samplingsteps_time}, the sampling time decreases as the number of sampling steps $S$ is reduced, which is expected.

We also study the hyperparameter sensitivity of $\lambda$ in the loss function. We vary $\lambda$ from the set $\{0.01, 0.05, 0.1, 0.5, 1.0\}$ and present the results in \cref{fig:lambda_performance}. We observe that LBDTPP achieves stable performance across different values of $\lambda$ and consistently outperforms all baselines in \Cref{table: conditional generation OTD and RMSE comparison} with respect to OTD and $\text{RMSE}_{m}$ on both datasets. This demonstrates that LBDTPP is robust to the choice of $\lambda$ and can maintain strong performance without requiring extensive hyperparameter tuning.

\subsection{Model Variants (\textbf{A6})}\label{sec:model variants}

We now conduct model variant experiments to examine two design choices of LBDTPP on the conditional generation task using Taxi and Taobao datasets. As described in \Cref{sec:model architecture}, the default LBDTPP encoder contains no learnable parameters: the temporal component is encoded by sinusoidal time embeddings, and the mark component is obtained from a fixed embedding matrix. To evaluate whether a learnable mark representation is beneficial, we consider a variant named LBDTPP-LM, where the embedding matrix $\mathbf{W}$ in \myref{eq:mark-embed} is optimized during training. In addition, the default LBDTPP adopts $\mathbf{z}^{b}$-prediction in the latent block diffusion module, where the denoising network directly predicts the clean latent block. To assess this prediction target, we introduce another variant named LBDTPP-EP, which instead adopts $\boldsymbol{\epsilon}^{b}$-prediction and predicts the Gaussian noise added to the latent block.

\begin{figure}[t]
    \centering
    \includegraphics[width=\linewidth]{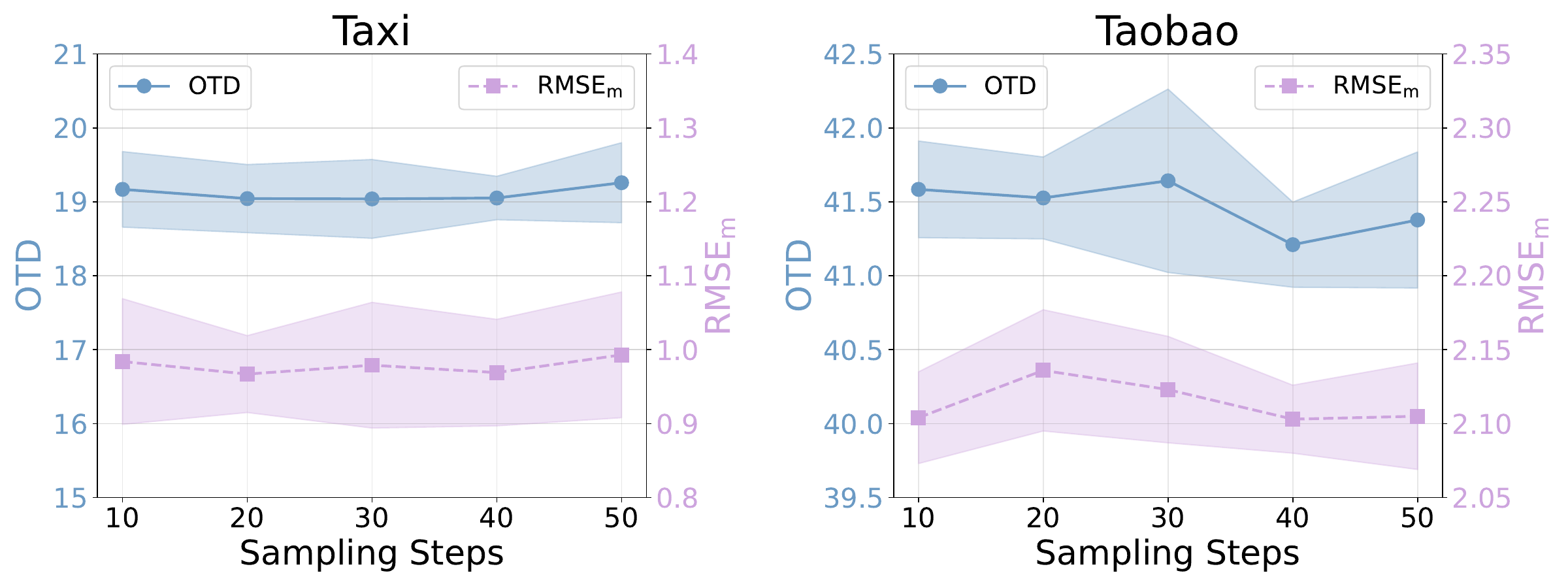}
    \vspace{-15pt}
    \caption{Impact of sampling step ($S$) on LBDTPP performance for conditional generation on Taxi and Taobao datasets.}
    \label{fig:samplingsteps_performance}
\end{figure}

\begin{figure}[t] 
    \centering
    \includegraphics[width=0.96\linewidth]{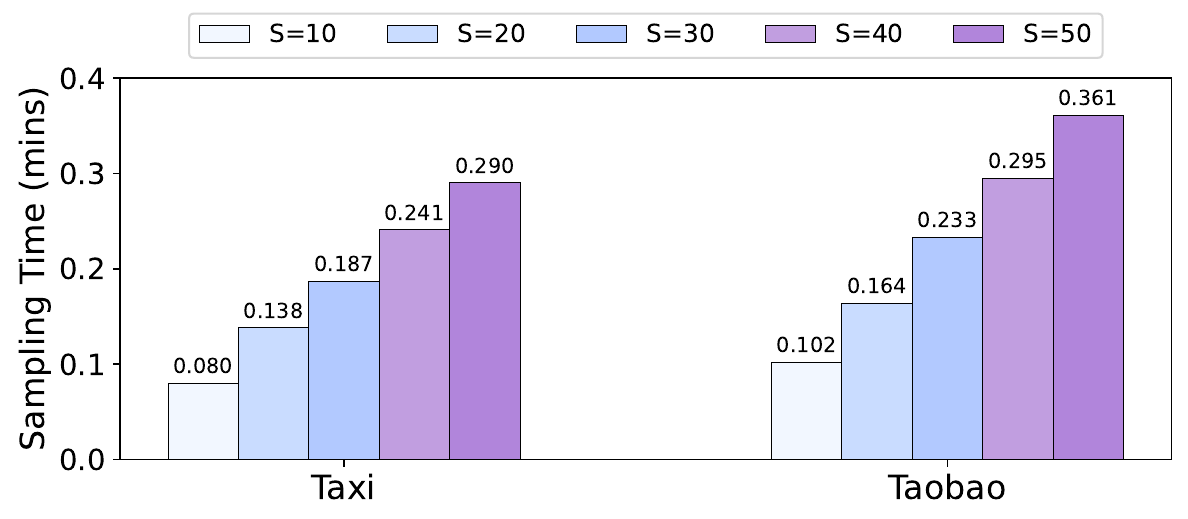}
    \vspace{-2pt} 
    \caption{Impact of sampling step ($S$) on LBDTPP sampling time for conditional generation on Taxi and Taobao datasets.} 
    \label{fig:samplingsteps_time} 
\end{figure}

\begin{figure}[t]
    \centering
    \includegraphics[width=\linewidth]{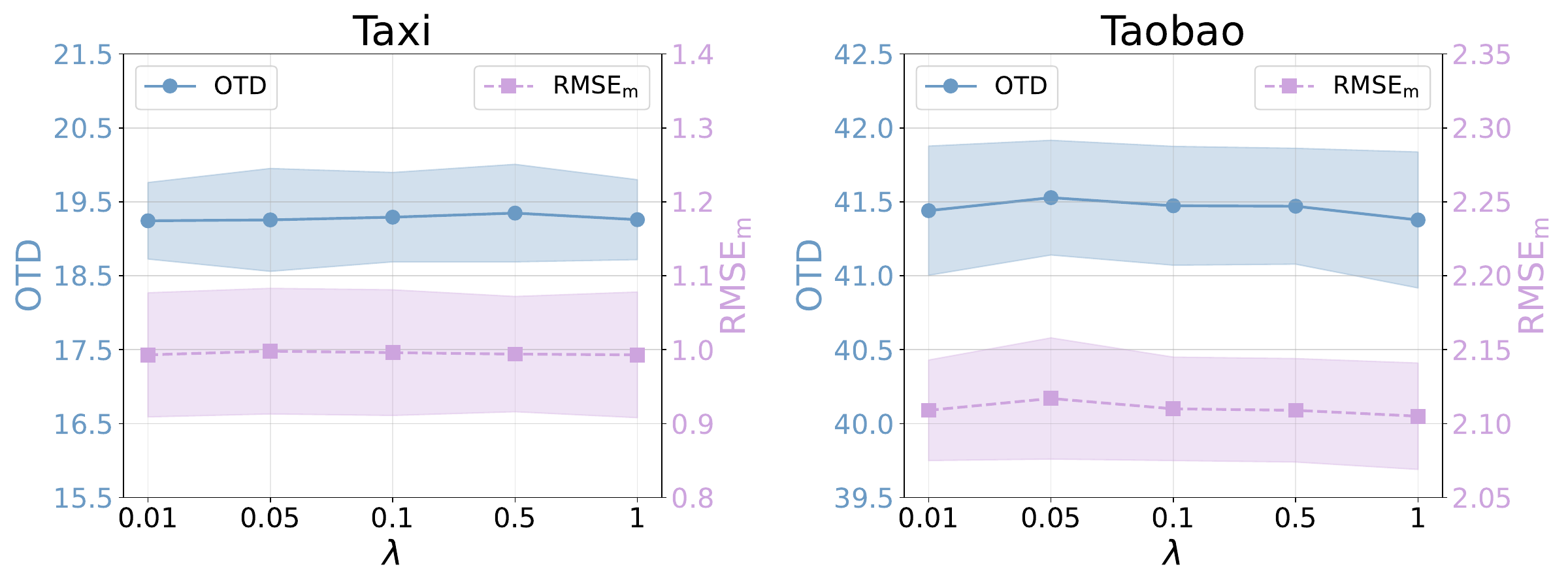}
    \vspace{-14pt}
    \caption{Impact of the values of $\lambda$ on LBDTPP performance for conditional generation on Taxi and Taobao datasets.}
    \label{fig:lambda_performance}
\end{figure}

The results are shown in \Cref{fig:model_variant_performance}. We can observe that LBDTPP slightly outperforms LBDTPP-LM on both datasets, indicating that a parameter-free event encoder is already sufficient for constructing effective latent event representations. This also suggests that introducing a learnable mark embedding matrix does not necessarily improve generation quality under our setting, and the fixed encoder can avoid additional parameters without sacrificing performance. Moreover, LBDTPP consistently performs better than LBDTPP-EP, demonstrating that $\mathbf{z}^{b}$-prediction is more effective than $\boldsymbol{\epsilon}^{b}$-prediction in our latent block diffusion framework. One possible reason is that the clean latent block $\mathbf{z}^{b}$ is exactly the input to the event decoder, so directly predicting $\mathbf{z}^{b}$ provides the decoder with the target latent representation without an additional recovery step. In contrast, $\boldsymbol{\epsilon}^{b}$-prediction first estimates the injected Gaussian noise and then recovers the clean latent block using the relation in \myref{equ:closed form}, which involves dividing by $\sqrt{\bar{\alpha}_k}$. As a result, prediction errors may be amplified when $k$ is large and $\bar{\alpha}_k$ is small, and these errors can subsequently propagate to the event decoder. Since the latent representations jointly encode continuous temporal information and discrete mark information, preserving their clean block structure is particularly important for accurate event sequence generation.

\begin{figure}[t]
    \centering
    \includegraphics[width=\linewidth]{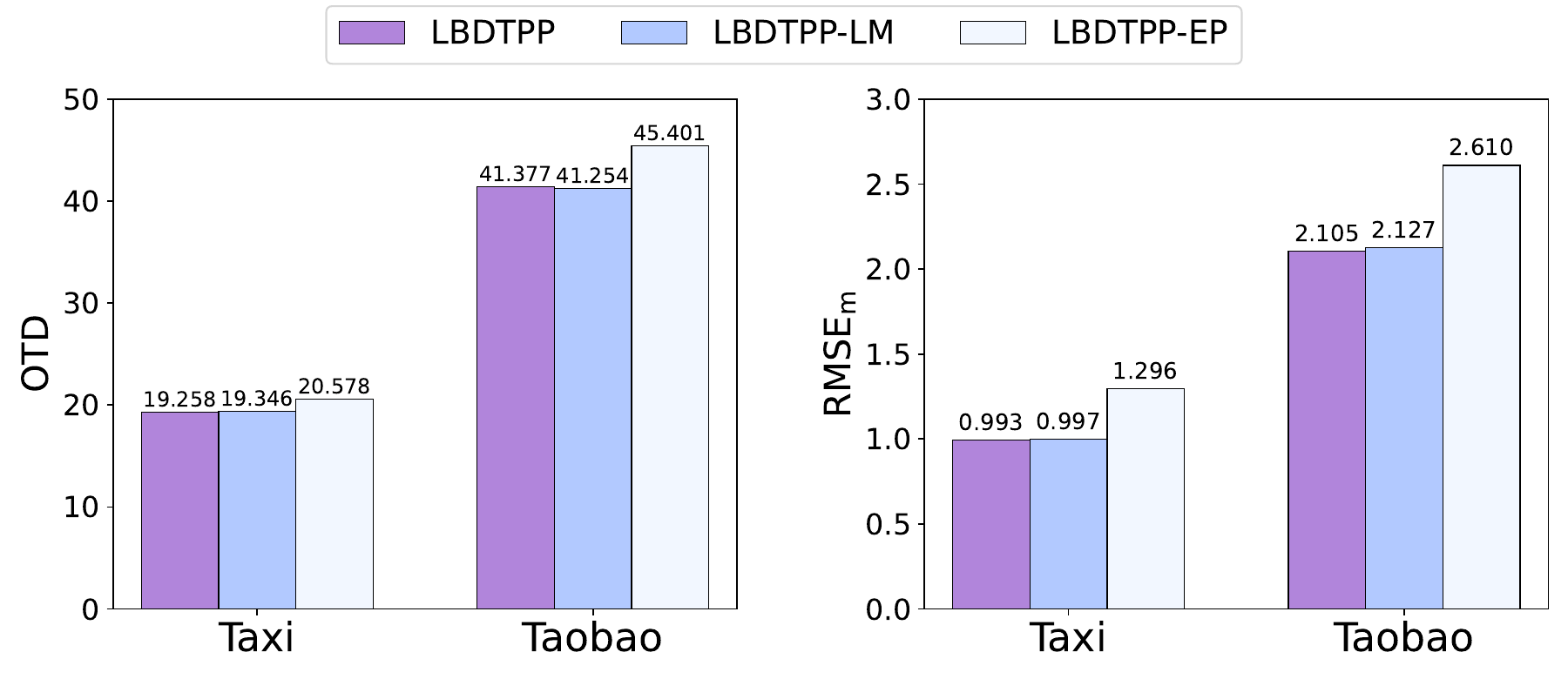}
    \vspace{-15pt}
    \caption{Performance of different LBDTPP variants for conditional generation on Taxi and Taobao datasets.}
    \label{fig:model_variant_performance}
\end{figure}

\begin{figure*}[t]
    \centering
    \includegraphics[width=1\linewidth]{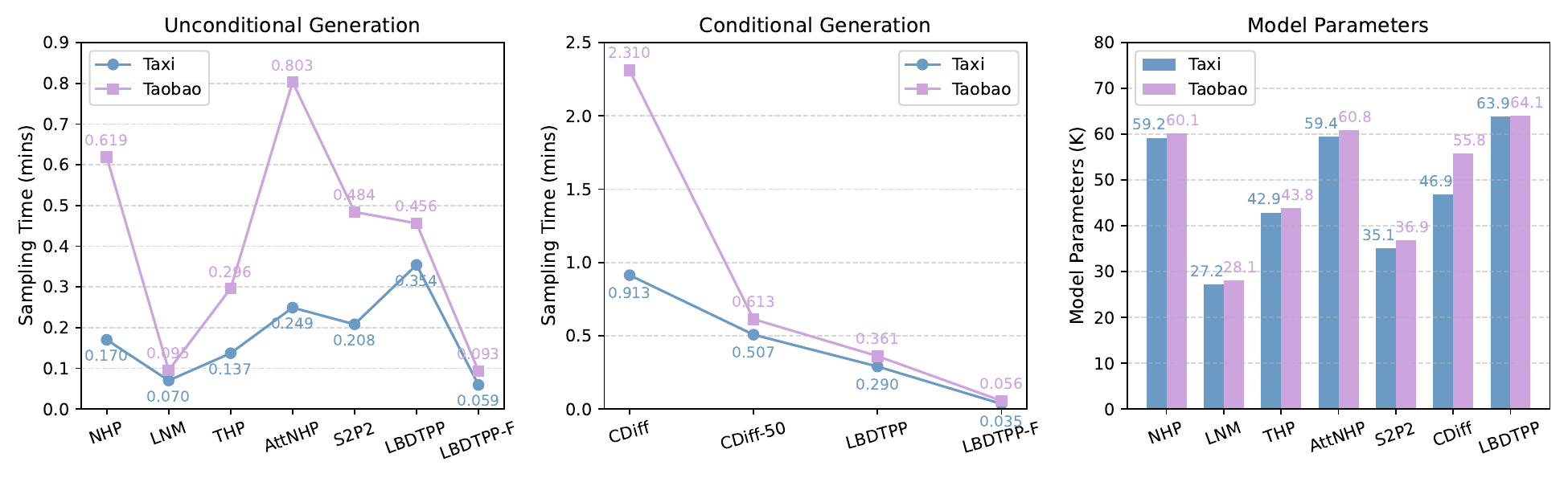}
    \vspace{-15pt}
    \caption{Comparison of sampling time and model parameters with baselines. Sampling time is reported in minutes, and model parameters are reported in thousands (K). Our LBDTPP model achieves comparable or lower sampling time.}
    \label{fig:time_comparison}
\end{figure*}
\subsection{Sampling Time Comparison (\textbf{A7})}\label{sec:sampling time comparison}

We compare the sampling time of our model with that of TPP baselines for both unconditional and conditional generation tasks on Taxi and Taobao datasets. For unconditional generation, we compare LBDTPP with five autoregressive TPP baselines. For conditional generation, we compare LBDTPP with CDiff, the current state-of-the-art diffusion-based TPP baseline. The sampling time is measured on the entire test set and the batch size is set to 32 for all models. Our model keeps the same experimental settings as in Sections~\ref{sec:unconditional generation task} and \ref{sec:conditional generation task}. 

The sampling time results and model parameters are illustrated in \cref{fig:time_comparison}. We can see that our LBDTPP model achieves comparable or lower sampling times in both tasks. LNM models the conditional density function of inter-event times through a mixture of log-normal distributions. A primary advantage of this approach lies in its closed-form sampling expression, which leads to more efficient sampling. The other autoregressive baselines model the conditional intensity function and rely on the thinning algorithm \cite{ogata1981lewis,xue2022hypro} for iterative sampling, resulting in slower speeds. CDiff introduces multiple sampling rounds, where the average of the sampled times and the mode of the sampled marks are used as the final generated events, thereby increasing the overall sampling time. Here, we use 5 rounds from the original code. In contrast, our model requires only one sampling round to achieve high-quality event sequence. For CDiff, we follow the optimal hyperparameter settings reported in the original paper, using 100 sampling steps on Taxi and 200 sampling steps on Taobao, whereas LBDTPP uses 50 sampling steps in all main experiments. For a more controlled comparison, we run CDiff with 50 sampling steps, denoted as CDiff-50, and report its sampling time in \cref{fig:time_comparison}. Even under the same sampling-step budget, our model still shows a clear sampling-time advantage.

Based on the experimental results in Sections~\ref{sec:different block sizes} and \ref{sec:different sampling steps}, LBDTPP's performance is stable with respect to both block size and sampling steps. Note that $L^{\prime}=8$ and $S=50$ are used for unconditional generation, while $L^{\prime}=4$ and $S=50$ are used for conditional generation. We further report the sampling time of a faster version for both tasks by setting $L^{\prime}=20$ and $S=10$, denoted as LBDTPP-F. As shown in \cref{fig:time_comparison}, LBDTPP-F is faster than all baseline models.

\section{Conclusion}\label{sec:conclusion}

We have presented LBDTPP, a novel semi-autoregressive TPP framework that introduces latent block diffusion for modeling asynchronous event sequences. By generating event sequences block by block with parallel generation within each block, LBDTPP supports high-quality, variable-length generation while mitigating error accumulation in autoregressive TPPs and overcoming the fixed-length generation limitation of non-autoregressive diffusion TPPs. Extensive experiments on six real-world datasets demonstrate the superiority of LBDTPP over state-of-the-art TPP baselines in both unconditional and conditional generation tasks. Further analysis confirms the contributions of latent-space diffusion and block-wise generation to the performance improvement. Moreover, LBDTPP achieves comparable or lower sampling times, showcasing its efficiency in generating high-quality event sequences. Future work includes investigating more efficient diffusion sampling techniques to further reduce sampling times, and extending our framework to handle spatio-temporal point processes.

\section{Acknowledgments}
This work was partially supported by the Strategic Priority Research Program of the Chinese Academy of Sciences (No. XDB0680101), the National Natural Science Foundation of China (No. 62472416 and 62402491), and the CAS Project for Young Scientists in Basic Research (No. YSBR-008). The model training was performed on the robotic AI-Scientist platform of Chinese Academy of Sciences.
\bibliographystyle{IEEEtran}
\bibliography{ref}
\newpage
\section*{Supplementary Material}
\subsection{Derivation of NELBO for LBDTPP in Latent Space}\label{sec:nelbo_derivation}

\begin{proof}[Proof of \Cref{prop:blockwise latent elbo}]
Given the latent event sequence representation $\mathbf{z}=(\mathbf{z}^1,\ldots,\mathbf{z}^L)$ partitioned into $B:=L/L^{\prime}$ blocks of length $L^{\prime}$ (with padding applied if $L$ is not divisible by $L^{\prime}$), we denote the index range of the $b$-th block as $\ell_b+1=(b-1)L^{\prime}+1$ to $\ell_{b+1}=bL^{\prime}$, and the $b$-th block as $\mathbf{z}^b=(\mathbf{z}^{\ell_b+1},\ldots,\mathbf{z}^{\ell_{b+1}})$. We use $\mathbf{z}^{<b}=(\mathbf{z}^1,\ldots,\mathbf{z}^{\ell_b})$ to denote all the historical blocks before the $b$-th block. For each $b\in[B]$, we define a forward diffusion process that gradually adds Gaussian noise to the clean block $\mathbf{z}_0^b=\mathbf{z}^b$:
\begin{align*}
q\left(\mathbf{z}_{1: K}^b \mid \mathbf{z}_0^b\right)&=\prod_{k=1}^{K} q\left(\mathbf{z}_k^b \mid \mathbf{z}_{k-1}^b\right),\\
q\left(\mathbf{z}_k^b \mid \mathbf{z}_{k-1}^b\right)&=\mathcal{N}\left(\mathbf{z}_k^b ; \sqrt{\alpha_k} \mathbf{z}_{k-1}^b, (1-\alpha_k) \mathbf{I}\right).
\end{align*}

The reverse denoising process for the $b$-th block starts from $p\left(\mathbf{z}_K^b \mid \mathbf{z}^{<b}\right)=\mathcal{N}\left(\mathbf{z}_K^b ; \mathbf{0}, \mathbf{I}\right)$ and proceeds as follows:
\begin{align*}
p_{\mathbf{\theta}}\left(\mathbf{z}_{0: K}^b\mid \mathbf{z}^{<b}\right)&=p\left(\mathbf{z}_K^b \mid \mathbf{z}^{<b}\right) \prod_{k=1}^K p_{\mathbf{\theta}}\left(\mathbf{z}_{k-1}^b \mid \mathbf{z}_k^b, \mathbf{z}^{<b}\right), \\
p_{\mathbf{\theta}}\left(\mathbf{z}_{k-1}^b \mid \mathbf{z}_{k}^b, \mathbf{z}^{<b}\right)&=\mathcal{N}\left(\mathbf{z}_{k-1}^b; \boldsymbol{\mu}_{\mathbf{\theta}}^b\left(\mathbf{z}_{k}^b, \mathbf{z}^{<b}, k\right), \sigma_k^2\mathbf{I}\right).
\end{align*}

Then the NELBO of our model is obtained as follows:
\allowdisplaybreaks
\begin{align*}
&-\log p_\theta(\mathbf{z})  =-\sum_{b=1}^B \log p_{\mathbf{\theta}}\left(\mathbf{z}^b \mid \mathbf{z}^{<b}\right) \\
& = -\sum_{b=1}^B \log \int p_{\mathbf{\theta}}\left(\mathbf{z}_{0: K}^b\mid \mathbf{z}^{<b}\right) \mathrm{d} \mathbf{z}_{1: K}^b\\
& = -\sum_{b=1}^B \log \int \frac{p_{\mathbf{\theta}}\left(\mathbf{z}_{0: K}^b\mid \mathbf{z}^{<b}\right) q\left(\mathbf{z}_{1: K}^b \mid \mathbf{z}_0^b\right)}{q\left(\mathbf{z}_{1: K}^b \mid \mathbf{z}_0^b\right)} \mathrm{d} \mathbf{z}_{1: K}^b\\
& = -\sum_{b=1}^B \log \mathbb{E}_{q\left(\mathbf{z}_{1: K}^b \mid \mathbf{z}_0^b\right)}\bigg[\frac{p_{\mathbf{\theta}}\left(\mathbf{z}_{0: K}^b\mid \mathbf{z}^{<b}\right)}{q\left(\mathbf{z}_{1: K}^b \mid \mathbf{z}_0^b\right)}\bigg]\\
& \leq -\sum_{b=1}^B \mathbb{E}_{q\left(\mathbf{z}_{1: K}^b \mid \mathbf{z}_0^b\right)} \bigg[\log\frac{p_{\mathbf{\theta}}\left(\mathbf{z}_{0: K}^b\mid \mathbf{z}^{<b}\right)}{q\left(\mathbf{z}_{1: K}^b \mid \mathbf{z}_0^b\right)}\bigg]\\
& = -\sum_{b=1}^B \mathbb{E}_{q\left(\mathbf{z}_{1: K}^b \mid \mathbf{z}_0^b\right)}\Bigg[\log \frac{p\left(\mathbf{z}_K^b | \mathbf{z}^{<b}\right) \prod\limits_{k=1}^K p_{\mathbf{\theta}}\left(\mathbf{z}_{k-1}^b | \mathbf{z}_k^b, \mathbf{z}^{<b}\right)}{\prod\limits_{k=1}^K q\left(\mathbf{z}_k^b \mid \mathbf{z}_{k-1}^b\right)}\Bigg]\\
& = -\sum_{b=1}^B \mathbb{E}_{q\left(\mathbf{z}_{1: K}^b \mid \mathbf{z}_0^b\right)}\Bigg[\log \frac{p\left(\mathbf{z}_K^b \mid \mathbf{z}^{<b}\right) p_{\mathbf{\theta}}\left(\mathbf{z}_{0}^b \mid \mathbf{z}_1^b, \mathbf{z}^{<b}\right)}{q\left(\mathbf{z}_1^b \mid \mathbf{z}_{0}^b\right)} \\
& \qquad\qquad\qquad\qquad\quad+ \log\prod\limits_{k=2}^K\frac{p_{\mathbf{\theta}}\left(\mathbf{z}_{k-1}^b \mid \mathbf{z}_k^b, \mathbf{z}^{<b}\right)}{q\left(\mathbf{z}_k^b \mid \mathbf{z}_{k-1}^b, \mathbf{z}_{0}^b\right)} \Bigg]\\[10pt]
& = -\sum_{b=1}^B \mathbb{E}_{q\left(\mathbf{z}_{1: K}^b \mid \mathbf{z}_0^b\right)}\left[\log \frac{p\left(\mathbf{z}_K^b \mid \mathbf{z}^{<b}\right) p_{\mathbf{\theta}}\left(\mathbf{z}_{0}^b \mid \mathbf{z}_1^b, \mathbf{z}^{<b}\right)}{q\left(\mathbf{z}_1^b \mid \mathbf{z}_{0}^b\right)} \right.\\
& \qquad\qquad\qquad\qquad\quad\left.+ \log\prod\limits_{k=2}^K\frac{p_{\mathbf{\theta}}\left(\mathbf{z}_{k-1}^b \mid \mathbf{z}_k^b, \mathbf{z}^{<b}\right)}{\frac{q\left(\mathbf{z}_{k-1}^b \mid \mathbf{z}_k^b, \mathbf{z}_0^b\right) q\left(\mathbf{z}_k^b \mid \mathbf{z}_0^b\right)}{q\left(\mathbf{z}_{k-1}^b \mid \mathbf{z}_0^b\right)}} \right]\\
& =\sum_{b=1}^B \bigg[\underbrace{-\mathbb{E}_{q\left(\mathbf{z}_1^b \mid \mathbf{z}_0^b\right)}\left[\log p_{\mathbf{\theta}}\left(\mathbf{z}_0^b \mid \mathbf{z}_1^b, \mathbf{z}^{<b}\right)\right]}_{\text {reconstruction term}} \\[10pt]
&\quad\quad\quad\,\,\,+\underbrace{D_{\mathrm{KL}}\left(q\left(\mathbf{z}_K^b \mid \mathbf{z}_0^b\right) \|\, p\left(\mathbf{z}_K^b \mid \mathbf{z}^{<b}\right)\right)}_{\text {prior matching term}}\nonumber\\
&+\underbrace{\sum_{k=2}^K \mathbb{E}_{q\left(\mathbf{z}_k^b \mid \mathbf{z}_0^b\right)}\!\left[D_{\mathrm{KL}}\!\left(q\left(\mathbf{z}_{k-1}^b | \mathbf{z}_k^b, \mathbf{z}_0^b\right) \|\, p_{\mathbf{\theta}}\left(\mathbf{z}_{k-1}^b | \mathbf{z}_k^b, \mathbf{z}^{<b}\right)\right)\right]}_{\text {denoising matching term}}\!\bigg]\\
&=:\sum_{b=1}^B \mathcal{J}_b(\mathbf{z}^b,\mathbf{z}^{<b};\theta).
\end{align*}

In the above derivations, we use Jensen's inequality together with the Markov properties of the forward and reverse processes (Eqs.~\eqref{eq:q_joint} and \eqref{eq:p_joint}), closely following prior work \cite{ho2020denoising,luo2022understanding,arriolablock}. We can further leverage the Gaussian form of the transition distributions (Eqs.~\eqref{equ:forward definition} and \eqref{equ:reverse definition}) to simplify this NELBO into the surrogate latent block diffusion loss function $\mathcal{L}_{\text {LBD}}(\mathbf{z}; \mathbf{\theta})$ in \myref{equ:total block diffusion loss}. Importantly, the Kullback--Leibler (KL) divergence in the denoising matching term admits a closed-form Gaussian expression and can be reduced, up to constants and weighting coefficients, to a mean squared error (MSE)-based denoising loss. We omit the detailed simplification steps here and refer the reader to \cite{luo2022understanding}; in our setting, the same derivation applies after carrying out the simplification blockwise.
\end{proof}

\subsection{Proof of the Generation-Error Accumulation Results}\label{sec:error_accumulation_proof}
We provide the detailed proof of \Cref{thm:error_accumulation} for unconditional generation. The kernels written with ``$\mid$'' below are internal transition kernels from the chain-rule factorization of an unconditional sequence distribution; no external observed sequence is provided. Recall that $d_r(\mathbf{u},\mathbf{v})=\sum_{\ell=1}^{r}\|\mathbf{u}^{\ell}-\mathbf{v}^{\ell}\|_2$, and $W_1^r$ is the Wasserstein-1 distance induced by $d_r$.

\begin{proof}[Proof of \Cref{thm:error_accumulation}]
We first consider event-wise autoregressive generation. Let $\mathsf{P}_{1:\ell}$ be the true joint distribution of the first $\ell$ generated latent events, and let $\mathsf{Q}_{1:\ell}^{\mathrm{AR}}$ be the corresponding joint distribution produced by the event-wise autoregressive generator. Define
\begin{equation}
D_{\ell}=W_1^{\ell}\!\left(\mathsf{P}_{1:\ell},\mathsf{Q}_{1:\ell}^{\mathrm{AR}}\right),\qquad D_0=0.
\end{equation}
Here $D_{\ell}$ measures the accumulated generation error up to event $\ell$ through the distributional discrepancy between the true and generated length-$\ell$ prefixes.
Fix $\ell\geq1$. For any $\eta>0$, choose a coupling $\gamma_{\ell-1}$ of the true and generated length-$(\ell-1)$ event prefixes, namely $\mathsf{P}_{1:\ell-1}$ and $\mathsf{Q}_{1:\ell-1}^{\mathrm{AR}}$, such that
\begin{equation}
\mathbb{E}_{(\mathbf{h},\widehat{\mathbf{h}})\sim\gamma_{\ell-1}}
\left[d_{\ell-1}(\mathbf{h},\widehat{\mathbf{h}})\right]\leq D_{\ell-1}+\eta.
\end{equation}
For each paired event prefix $(\mathbf{h},\widehat{\mathbf{h}})$ with $\mathbf{h},\widehat{\mathbf{h}}\in\mathbb{R}^{(\ell-1)\times D}$, the next true latent event is sampled from the true unconditional transition kernel $\mathsf{P}_{\ell}(\cdot\mid\mathbf{h})$, while the next generated latent event is sampled from the learned autoregressive transition kernel $\mathsf{Q}_{\ell}^{\mathrm{AR}}(\cdot\mid\widehat{\mathbf{h}})$. By the triangle inequality for $W_1^1$ and \Cref{assump:error_accumulation},
\begin{align}
&W_1^1\!\left(\mathsf{P}_{\ell}(\cdot\mid\mathbf{h}),\mathsf{Q}_{\ell}^{\mathrm{AR}}(\cdot\mid\widehat{\mathbf{h}})\right) \nonumber\\
&\quad\leq
W_1^1\!\left(\mathsf{P}_{\ell}(\cdot\mid\mathbf{h}),\mathsf{P}_{\ell}(\cdot\mid\widehat{\mathbf{h}})\right)
+W_1^1\!\left(\mathsf{P}_{\ell}(\cdot\mid\widehat{\mathbf{h}}),\mathsf{Q}_{\ell}^{\mathrm{AR}}(\cdot\mid\widehat{\mathbf{h}})\right) \nonumber\\
&\quad\leq \rho_{\mathrm{AR}}d_{\ell-1}(\mathbf{h},\widehat{\mathbf{h}})+\varepsilon_{\mathrm{AR}}.
\label{eq:supp_next_event_coupling}
\end{align}
For each $(\mathbf{h},\widehat{\mathbf{h}})$, choose an optimal, or arbitrarily close to optimal, coupling of $\mathsf{P}_{\ell}(\cdot\mid\mathbf{h})$ and $\mathsf{Q}_{\ell}^{\mathrm{AR}}(\cdot\mid\widehat{\mathbf{h}})$. Combining it with $\gamma_{\ell-1}$ gives a valid coupling of $\mathsf{P}_{1:\ell}$ and $\mathsf{Q}_{1:\ell}^{\mathrm{AR}}$. Under this coupling, the expected length-$\ell$ sequence distance is bounded by
\begin{align}
D_{\ell}
&\leq \mathbb{E}\!\left[d_{\ell-1}(\mathbf{h},\widehat{\mathbf{h}})\right]
+\mathbb{E}\!\left[W_1^1\!\left(\mathsf{P}_{\ell}(\cdot\mid\mathbf{h}),\mathsf{Q}_{\ell}^{\mathrm{AR}}(\cdot\mid\widehat{\mathbf{h}})\right)\right] \nonumber\\
&\leq (1+\rho_{\mathrm{AR}})\mathbb{E}\!\left[d_{\ell-1}(\mathbf{h},\widehat{\mathbf{h}})\right]+\varepsilon_{\mathrm{AR}} \nonumber\\
&\leq (1+\rho_{\mathrm{AR}})(D_{\ell-1}+\eta)+\varepsilon_{\mathrm{AR}}.
\end{align}
Letting $\eta\downarrow0$ yields the recurrence
\begin{equation}
D_{\ell}\leq (1+\rho_{\mathrm{AR}})D_{\ell-1}+\varepsilon_{\mathrm{AR}}.
\end{equation}
Unrolling it from $D_0=0$ gives
\begin{equation}
D_L\leq \varepsilon_{\mathrm{AR}}\sum_{j=0}^{L-1}(1+\rho_{\mathrm{AR}})^j
=\varepsilon_{\mathrm{AR}}A_L(\rho_{\mathrm{AR}}),
\end{equation}
which proves \myref{eq:ar_error_bound}.

We next prove the block-wise result for unconditional generation. Let $\mathsf{P}_{1:b}^{\mathrm{BL}}$ and $\mathsf{Q}_{1:b}^{\mathrm{BL}}$ denote the true and generated joint distributions of the first $b$ latent blocks, equivalently the first $bL^{\prime}$ latent events. Define
\begin{equation}
E_b=W_1^{bL^{\prime}}\!\left(\mathsf{P}_{1:b}^{\mathrm{BL}},\mathsf{Q}_{1:b}^{\mathrm{BL}}\right),\qquad E_0=0.
\end{equation}
Here $E_b$ quantifies the corresponding generation error after $b$ generated blocks.
Repeating the preceding argument at the block level, for any coupling of previous block prefixes $(\mathbf{g},\mathbf{g}')$ with $\mathbf{g},\mathbf{g}'\in\mathbb{R}^{(b-1)L^{\prime}\times D}$, the triangle inequality and \Cref{assump:error_accumulation} give
\begin{align}
&W_1^{L^{\prime}}\!\left(\mathsf{P}_{b}^{\mathrm{BL}}(\cdot\mid\mathbf{g}),\mathsf{Q}_{b}^{\mathrm{BL}}(\cdot\mid\mathbf{g}')\right) \nonumber\\
&\quad\leq
W_1^{L^{\prime}}\!\left(\mathsf{P}_{b}^{\mathrm{BL}}(\cdot\mid\mathbf{g}),
\mathsf{P}_{b}^{\mathrm{BL}}(\cdot\mid\mathbf{g}')\right) \nonumber\\
&\qquad+
W_1^{L^{\prime}}\!\left(\mathsf{P}_{b}^{\mathrm{BL}}(\cdot\mid\mathbf{g}'),
\mathsf{Q}_{b}^{\mathrm{BL}}(\cdot\mid\mathbf{g}')\right) \nonumber\\
&\quad\leq \rho_{\mathrm{BL}}d_{(b-1)L^{\prime}}(\mathbf{g},\mathbf{g}')+\varepsilon_{\mathrm{BL}}.
\end{align}
Since the sequence metric is additive across blocks, this yields
\begin{equation}
E_b\leq (1+\rho_{\mathrm{BL}})E_{b-1}+\varepsilon_{\mathrm{BL}}.
\end{equation}
Unrolling the recurrence from $E_0=0$ gives
\begin{equation}
E_B\leq \varepsilon_{\mathrm{BL}}\sum_{j=0}^{B-1}(1+\rho_{\mathrm{BL}})^j
=\varepsilon_{\mathrm{BL}}A_B(\rho_{\mathrm{BL}}),
\end{equation}
which proves \myref{eq:block_error_bound}.

It remains to prove \myref{eq:accumulation_factor}. Since $A_n(\rho)=\sum_{j=0}^{n-1}(1+\rho)^j$ is nondecreasing in $\rho$, the conditions $\varepsilon_{\mathrm{BL}}\leq L^{\prime}\varepsilon_{\mathrm{AR}}$ and $\rho_{\mathrm{BL}}\leq \rho_{\mathrm{AR}}=\rho$ imply that the block-wise upper bound is at most $L^{\prime}\varepsilon_{\mathrm{AR}}A_B(\rho)$, whereas the event-wise upper bound is $\varepsilon_{\mathrm{AR}}A_L(\rho)$. If $\rho=0$, then $A_B(0)=B$ and $A_L(0)=L=BL^{\prime}$, so $L^{\prime}A_B(0)/A_L(0)=1$. If $\rho>0$, set $a=1+\rho>1$. Since $L=BL^{\prime}$,
\begin{equation}
\frac{A_L(\rho)}{A_B(\rho)}
=\frac{a^{BL^{\prime}}-1}{a^B-1}
=1+a^B+a^{2B}+\cdots+a^{(L^{\prime}-1)B}
\geq L^{\prime}.
\end{equation}
Therefore, $L^{\prime}A_B(\rho)/A_L(\rho)\leq1$, with strict inequality whenever $\rho>0$ and $L^{\prime}>1$. This completes the proof.
\end{proof}

\begin{proof}[Proof of \myref{eq:decoder_transfer}]
Let $\mu$ and $\nu$ be two latent sequence distributions on $\mathbb{R}^{L\times D}$, and let $\pi$ be any coupling of $\mu$ and $\nu$. If $(\mathbf{U},\mathbf{V})\sim\pi$, then $\big(g_{\phi}(\mathbf{U}),g_{\phi}(\mathbf{V})\big)$ is a coupling of the push-forward distributions $(g_{\phi})_{\#}\mu$ and $(g_{\phi})_{\#}\nu$. If $g_{\phi}$ is $L_{\mathrm{dec}}$-Lipschitz,
\begin{equation}
\mathbb{E}_{\pi}\!\left[\Delta_L\!\left(g_{\phi}(\mathbf{U}),g_{\phi}(\mathbf{V})\right)\right]
\leq L_{\mathrm{dec}}\mathbb{E}_{\pi}\!\left[d_L(\mathbf{U},\mathbf{V})\right].
\end{equation}
Taking the infimum over all couplings $\pi$ of $\mu$ and $\nu$ gives
\begin{equation}
W_{\Delta_L}\!\left((g_{\phi})_{\#}\mu,(g_{\phi})_{\#}\nu\right)
\leq L_{\mathrm{dec}}W_1^L(\mu,\nu).
\end{equation}
Substituting $\mu=\mathsf{P}_{1:L}$ and $\nu=\mathsf{Q}_{1:L}$ proves \myref{eq:decoder_transfer}.
\end{proof}

\end{document}